\pdfoutput=1
\documentclass{article}
\usepackage[utf8]{inputenc}
\usepackage{libertine} %
\usepackage[letterpaper,margin=1in]{geometry}
\usepackage[round]{natbib}

\usepackage{titlesec}
\setlength{\parindent}{0pt}
\setlength{\parskip}{6pt plus 2pt minus 2pt}
\setlength{\emergencystretch}{3em}
\titlespacing\section{0pt}{12pt plus 4pt minus 2pt}{2pt plus 2pt minus 2pt} %

\usepackage{additional_style}
\newcommand{\eat}[1]{}

\newcommand{\rgta}{\rightarrow}

\newcommand{\zo}{[0,1]}
\newcommand{\bits}{\{0,1\}}

\newcommand{\Min}{\mathsf{Min}}

\newcommand{\scerror}{\mathsf{smCE}}
\newcommand{\gence}{\mathsf{genCE}} 
\newcommand{\gengap}{\mathsf{genGap}}

\newcommand{\pgap}{\mathsf{pGap}}

\newcommand{\cgap}{\pgap}

\newcommand{\MSE}{\mathrm{MSE}}

\newcommand{\ber}{{\mathsf{Ber}}}
\newcommand{\dual}{{\mathsf{dual}}}

\newcommand{\proj}{{\mathsf{proj}}}

\DeclareMathOperator*{\argmin}{argmin}

\newcommand{\Z}{\mathbb{Z}}

\newcommand{\R}{\mathbb{R}}

\newcommand{\E}{\mathop{\mathbb{E}}}

\renewcommand{\epsilon}{\varepsilon}
\newcommand{\eps}{\varepsilon}

\newcommand{\cD}{\mathcal{D}}
\newcommand{\cL}{\mathcal{L}}

\newcommand{\cX}{\mathcal{X}}

\newcommand{\cF}{\mathcal{F}}

\newcommand{\X}{\mathcal{X}}

\newcommand{\mC}{\mathcal{C}}

\newcommand{\mF}{\mathcal{F}}
\newcommand{\mD}{\mathcal{D}}

\newcommand{\izo}{\ensuremath{[0,1]}}

\newcommand{\opt}{\mathrm{OPT}}

\newcommand{\sps}[1]{^{(#1)}} %

\def\shownotes{1}  
\ifnum\shownotes=1
\usepackage{todonotes}
\else
\usepackage[disable]{todonotes}
\fi

\newcommand{\pnote}[1]{\textcolor{magenta}{[PN: #1]}}
\newcommand{\jnote}[1]{\todo[linecolor=blue,backgroundcolor=green!25,bordercolor=blue,inline]{#1 --JB}}

\usepackage{thm-restate}

\newcommand{\sq}{\ell_{\mathsf{sq}}}
\newcommand{\xent}{\ell_{\mathsf{xent}}}

\usepackage{enumitem}
\newcommand{\lunjia}[1]{\textcolor{blue}{[LH: #1]}}

\title{When Does Optimizing a Proper Loss Yield Calibration?}

\author{
Jaros\l aw B\l asiok\\
Columbia University
\and
Parikshit Gopalan\\
Apple \\
\and
Lunjia Hu\\
Stanford University \\
\and
Preetum Nakkiran\\
Apple 
}

\date{}
\allowdisplaybreaks

\begin{document}

\maketitle

\begin{abstract}
Optimizing proper loss functions is popularly believed to yield predictors with good calibration properties; the intuition being that for such losses, the global optimum is to predict the ground-truth probabilities, which is indeed calibrated.
However, typical machine learning models are trained to approximately minimize loss over restricted families of predictors, that are unlikely to contain the ground truth.
Under what circumstances does optimizing proper loss  over a restricted family yield calibrated models? What precise calibration guarantees does it give? In this work, we provide a rigorous answer to these questions. 
We replace the global optimality with a local optimality condition stipulating that the (proper) loss of the predictor cannot be reduced much by post-processing its predictions with a certain family of Lipschitz functions. We show that any predictor with this local optimality satisfies smooth calibration as defined in \citet{kakadeF08, UTC1}. Local optimality is plausibly satisfied by well-trained DNNs, which suggests an explanation
for why they are calibrated from proper loss minimization alone.
Finally, we show that the connection between local optimality and calibration error goes both ways: nearly calibrated predictors are also nearly locally optimal.

\eat{

\pnote{need to shorten:}
When training machine learning models,
\emph{accuracy} and \emph{calibration} are a priori distinct objectives:
optimizing for only one will not in general achieve the other.
Hterwever, recent empirical studies of deep neural networks (DNNs)
have revealed a surprising phenomenon:
DNNs trained to minimize test loss are
sometimes almost perfectly calibrated ``out of the box,''
without any explicit re-calibration.
This phenomenon occurs 
even for models that have poor accuracy
(far from the Bayes risk),
and thus is not well-explained by existing theories of calibration.

To help theoretically understand this phenomena, 
we identify a new property of learning algorithms
which characterizes when a loss-minimization objective yields calibration ``for free.''
Informally, we show that a binary classifier
is close to calibrated
iff its loss cannot be improved much by
a Lipshitz post-processing of the classifier.
This latter condition is plausibly satisfied by well-trained DNNs,
thus giving a heuristic reason for why they are well-calibrated.
Our results give us both a new theoretical perspective on calibration,
and heuristic insight into the importance of depth in deep learning.

}
\end{abstract}

\section{Introduction}
\eat{
Proper loss functions such as the cross-entropy loss and the squared loss play a central role in training machine learning models. Let us consider binary classification where one has samples from a distribution $\mD$ on $\X \times \bits$, which corresponds to points $x$ from $\X$ with binary labels $y\in \{0,1\}$. 
We wish to assign probabilities to each $x \in \X$ that the label is $1$.
The ground truth in this setting is the Bayes optimal predictor $f^*(x) = \E[y|x]$,
which predicts the true conditional probabilities.
However, it is often intractable to compute for both information-theoretic and computational reasons, thus the goal of efficient learning is to produce tractable approximations $f:\X \to [0,1]$ to the Bayes optimal. \lunjia{The paragraph starts with proper loss but never mentions it again.}
The principle method for this is to
collect a large set of samples from $\mD$,
and optimize a proper loss within a computationally-efficient function family
(such as neural networks).
We are then interested in how good of an approximation this method produced.
\pnote{LH, how about this}
}
\vspace{-5pt}
In supervised prediction with binary labels,
two basic criteria by which we judge the quality of a predictor are accuracy and calibration.
Given samples from a distribution $\mD$ on $\X \times \bits$ that corresponds to points $x$ from $\X$ with binary labels $y\in \{0,1\}$, we wish to learn a predictor $f$ that assigns each $x$ a probability $f(x) \in [0,1]$ that the label is $1$.
Informally, accuracy measures how close the predictor $f$ is to the ground-truth $f^*(x) = \E[y|x]$.\footnote{Here we focus on the accuracy of the predictor $f$ as opposed to the accuracy of a classifier induced by $f$. }
 Calibration \citep{dawid1982well, FosterV98} is an interpretability notion originating from the literature on forecasting, which stipulates that the predictions of our model be meaningful as probabilities. For instance, on all points where $f(x) = 0.4$, calibration requires the true label to be $1$ about $40\%$ of the time. Calibration and accuracy are complementary notions: a reasonably accurate predictor (which is still far from optimal) need not be calibrated, and calibrated predictors (e.g.\ predicting the average) can have poor accuracy. The notions are complementary but not necessarily conflicting; for example, the ground truth itself is both optimally accurate and perfectly calibrated.

Proper losses are central to the quest for accurate approximations of the ground truth. Informally a proper loss ensures that  under binary labels $y$ drawn from the Bernoulli distribution with parameter $v^*$, the expected loss $\E[\ell(y, v)]$ is minimized at $v = v^*$ itself. Familiar examples include the squared loss $\sq(y, v) = (y -v)^2$ and the cross-entropy loss $\xent(y,v) = y \log(1/v) + (1 -y)\log(1/(1-v))$. A key property of proper losses is that over the space of all predictors, the expected loss $\E_\mD[\ell(y, f(x))]$ is minimized by the ground truth $f^*$ which also gives calibration. However minimizing over all predictors is infeasible. Hence typical machine learning deviates from this ideal in many ways: we restrict ourselves to models of bounded capacity such as decision trees or neural nets of a given depth and architecture, and we optimize loss using algorithms like SGD, which are only expected to find approximate local minima in the parameter space. 
What calibration guarantees transfer to this more restricted but realistic setting?

\eat{
\pnote{I don't think we need this discussion motivating accuracy/loss. Can we just say
"there is a formal sense in which optimizing a proper loss corresponds to optimizing "distance" to the Bayes optimal. Howver, for calibration.."}
Let us denote the set of predictors that we optimize over by $\mC$ (think of $\mC$ as all neural nets with a certain architecture). 
Minimizing a proper loss over a class $\mC$ finds the closest approximation within $\mC$ to the Bayes optimal, where closeness is measured using a Bregman divergence. This justifies equating accuracy with small loss, as lower loss implies being closer to Bayes optimal. But the connection between proper loss optimization and calibration is less obvious.
}

\paragraph{The State of Calibration \& Proper Losses.}
There is a folklore belief in some quarters that optimizing a proper loss, even in the restricted setting, will produce a calibrated predictor.
Indeed, the user guide for scikit-learn~\citep{scikit-learn} claims that {\em LogisticRegression returns well calibrated predictions by default as it directly optimizes Log loss}\footnote{\url{https://scikit-learn.org/stable/modules/calibration.html}. Log-loss is $\xent$ in our notation.}. Before going further, we should point out that this statement in its full generality is just not true:
{\bf for general function families $\mC$, minimizing proper loss (even globally) over $\mC$ might not result in a calibrated predictor}. 
Even if the class $\mC$ contains perfectly calibrated predictors, the  predictor found by loss minimization need not be (even close to) calibrated.
Figure~\ref{fig:logreg} shows how
even logistic regression can fail to give calibrated predictors.

Other papers suggest a close relation between minimizing a proper loss and calibration, but stop short of formalizing or quantifying the relationship.  For example, in deep learning, \citet{lakshminarayanan2017simple}
states {\em calibration can be incentivised by proper scoring rules}. 
There is also a large body of work on post-hoc recalibration methods. The recipe here is to first train a predictor via loss minimization over the training set, then compose it with a simple post-processing function chosen to
optimize cross-entropy on a holdout set \citep{Platt99probabilisticoutputs, zadrozny2002transforming}. See for instance Platt scaling
\citep{Platt99probabilisticoutputs}, where the post-processing functions are sigmoids, 
and which continues to be used in deep learning (e.g. 
for temperature scaling as in \citet{guo2017calibration}). Google's data science practitioner's guide recommends that, for recalibration methods,
{\em the calibration [objective] function should minimize a strictly proper scoring rule} \citep{richardsonpospisil2021}.  In short,  these works suggest recalibration by minimizing a proper loss, sometimes in conjunction with a simple family of post-processing functions. However, there are no rigorous bounds on the calibration error using these methods, nor is there justification for a particular choice of post-processing functions.
\eat{
\jnote{This is a bit unclear --- when we say ``recalibration'' I guess we mean post-processing, but then we say ``sometimes in conjuction with post-processing''. }
\pnote{There are two parts: minmize a proper loss, and compose with post-processing}

\eat{\pnote{todo, the below paragraph now overlaps with the above discussion of recalibration}}
But we have not found formal results of this flavor; for the simple reason that the statement in its strongest form is false: {\bf for general $\mC$,  minimizing proper loss over $\mC$ might not result in a calibrated predictor \lunjia{even when $\mC$ contains a perfectly calibrated predictor (it's important to emphasize this just to distinguish calibration from accuracy where the most accurate predictor in $\mC$ will be found by proper loss minimization)}}. Even if we assume that the family $\mC$ is convex, so that optimization succeeds in finding the global minimum over $\mC$, the resulting predictor need not be (even close to) calibrated. That linear or logistic regression can fail to give calibrated predictors is easily seen using simple examples; see Appendix \ref{sec:miscal}. }

Yet despite the lack of rigorous bounds, there are strong hints from practice
that in certain settings, optimizing a proper-loss does indeed yield calibrated predictors.
Perhaps the strongest evidence comes from the newest generation of deep neural networks (DNNs). These networks are trained to minimize a proper loss (usually cross-entropy) using SGD or variants, and are typically quite far from the ground truth, yet they turn out to be surprisingly well-calibrated.
This empirical phenomenon occurs in both modern image classifiers \citep{minderer2021revisiting,hendrycksaugmix},
and language models \citep{desai2020calibration,openai2023gpt4}.
The situation suggests that there is a key theoretical piece missing in our understanding of calibration,
to capture why models can be so well-calibrated ``out-of-the-box.''
This theory must be nuanced: not all ways of optimizing a proper loss with DNNs yields calibration.
For example, the previous generation of image classifiers were poorly calibrated \citep{guo2017calibration},
though the much smaller networks before them were well-calibrated \citep{niculescu2005predicting}.
Whether DNNs are calibrated, then depends on
the architecture, distribution, and training algorithm--- and our theory must be compatible with this.

\begin{figure}[t]
\begin{center}
\includegraphics[scale=0.9]{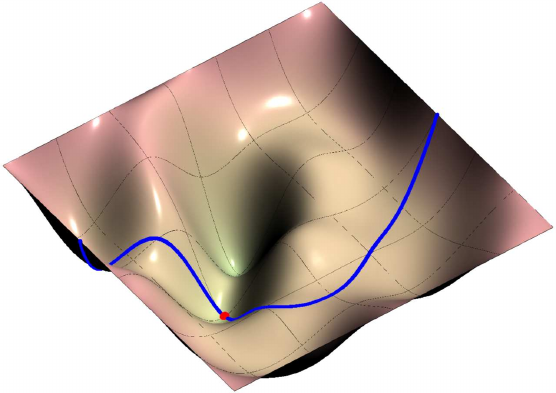}
\hspace{1cm}
\includegraphics[scale=0.9]{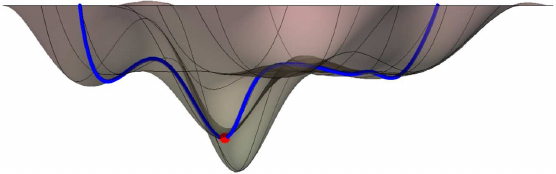}
\end{center}
\caption{
Schematic depiction of the loss landscape of a proper loss on the population distribution.
The red dot represents a well-calibrated predictor $f$.
The blue curve represents all predictors $\{\kappa \circ f: \kappa \in K_\ell\}$
obtained by admissible post-processing of $f$.
Since the proper loss cannot be decreased by post-processing (the red dot is a minimal point on the blue curve), the predictor $f$ is well-calibrated, even though it is not a global minimum.}
\label{fig:losslandscape}
\end{figure}

\begin{figure}[t]
\begin{center}
\includegraphics[width = 0.8\linewidth]{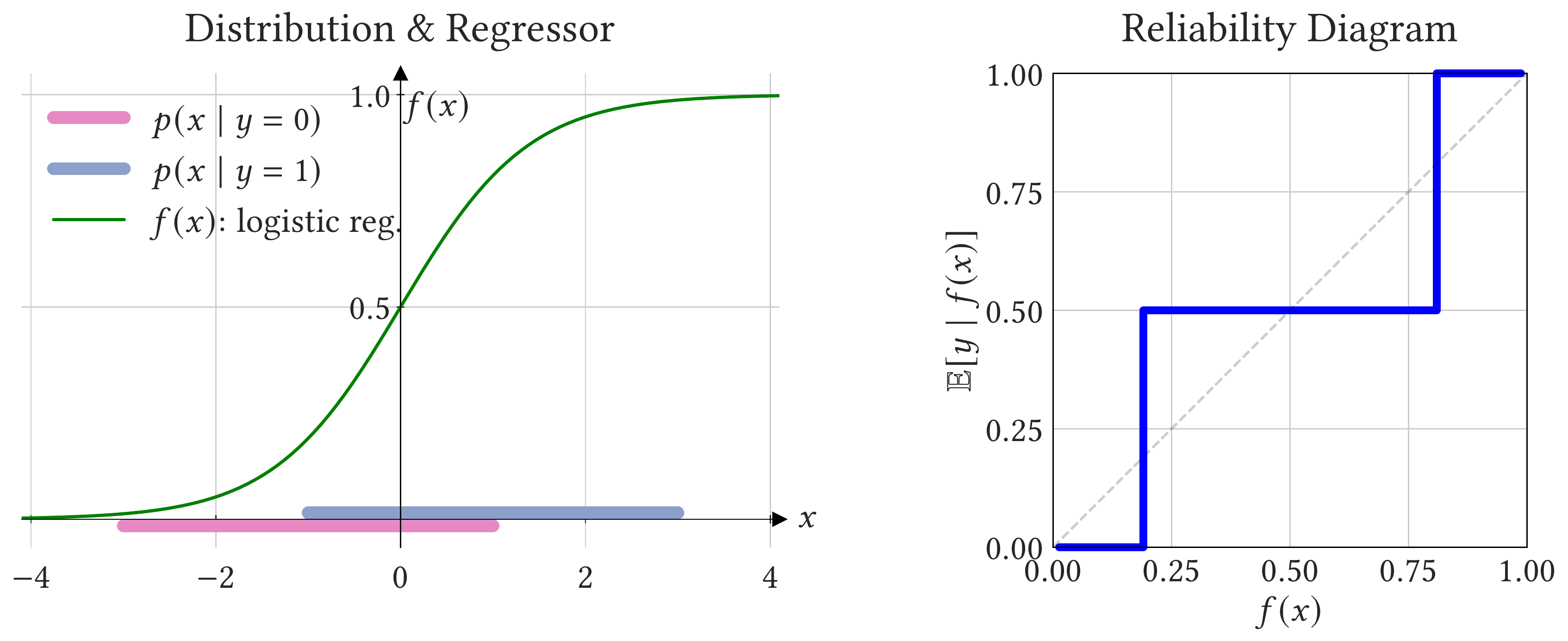}
\end{center}
\caption{
{\bf Logistic Regression can be Miscalibrated.}
We illustrate a 1-dimensional distribution on which logistic regression is severely miscalibrated:
consider two overlapping uniform distributions, one for each class.
Specifically, with probability $1/2$ we have $y=0$ and $x \sim \mathrm{Unif}([-3, 1])$,
and with the remaining probability $1/2$, we have $y=1$ and $x \sim \mathrm{Unif}([-1, 3])$.
This distribution is shown on the left, as well as the optimal logistic
regressor $f$ on this distribution.
In this setting, the logistic regressor is miscalibrated (as shown at right),
despite logistic regression optimizing a proper loss.
Note that the constant predictor $f(x) = 0.5$ is perfectly calibrated in this example, but logistic regression encourages the solution to \emph{deviate} from it in order to minimize the proper cross-entropy loss.
Similar examples of logistic regression being miscalibrated are also found in \citet{kull2017beta}.
}
\label{fig:logreg}
\end{figure}

In summary, prior work suggests a close but
complicated
relationship between minimizing proper loss and calibration.
On one hand, for simple models like logistic regression,
proper loss minimization does not guarantee calibration.
On the other hand, for DNNs, certain ways (but not all ways)
of optimizing a proper loss appears to yield well-calibrated predictors.
The goal of our work is to analyze this phenomena from a theoretical perspective.\footnote{For simplicity, we avoid generalization concerns and consider the setting where we minimize loss directly on the population distribution.
This is a reasonable simplification in 
many learning settings with sufficient samples,
such as the 1-epoch training of many Large-Language-Models.}
Our motivating question is: 
{\bf What minimal conditions on model family and training procedure guarantee that optimizing for a proper loss provably yields small calibration error?}

\subsection{Our Contributions}
\label{sec:contrib}

The main contribution of this work is to identify a {\bf local optimality condition} for a predictor that is both necessary and sufficient to guarantee calibration.
 The local optimality condition requires that the loss $\ell$ cannot be reduced significantly through post-processing using a family of functions $K_\ell$ that have certain Lipschitz-ness guarantees,
where post-processing means applying a function $\kappa:\zo \to \zo$ to the output of the predictor.
This condition is illustrated in Figure~\ref{fig:losslandscape}; it is distinct from the standard local optimality condition of vanishing gradients.
 We prove that a predictor satisfying this local optimality condition is smoothly calibrated in the sense of \cite{kakadeF08, GopalanKSZ22, UTC1}. Smooth calibration is a \emph{consistent} calibration measure that has several advantages over the commonly used Expected Calibration Error (ECE); see the discussion in \cite{kakadeF08, UTC1}. Quantitatively for the case of the squared loss,
 we present a tight connection between smooth calibration error and the reduction in loss that is possible from post-processing with $K_{\ell}$. For other proper loss functions $\ell$, we give a tight connection between post-processing with $K_{\ell}$
 and a calibration notion we call dual smooth calibration.
 The fact that the connection goes both ways implies that a predictor that does not satisfy our local optimality condition is far from calibrated, so we have indeed identified the minimal conditions needed to ensure calibration.

{\bf Implications.} Heuristically, we believe these theoretical results shed light on why modern DNNs are often calibrated in practice. There are at least three possible mechanisms, at varying levels of formality.

First, informally: poor calibration implies that the test loss can be reduced noticeably
by post-processing with a {\em simple} function.
But simple post-processings can be represented by adding a few layers to the DNN---
so we could plausibly improve the test loss by
adding a few layers and re-training (or continuing training) with SGD.
Thus, if our DNN has been trained to the point where such easy gains are not left on the table (as we expect of state-of-the-art models),
then it is well-calibrated.

The second possible mechanism follows from one way of formalizing the above heuristic, yielding natural learning algorithms that provably achieve calibration.
Specifically, consider Structural Risk Minimization (SRM):
globally minimizing a proper loss plus a complexity regularizer, within some restricted function family.
In Section~\ref{sec:main-algos}, we prove SRM is guaranteed to produce a calibrated predictor, provided:
(1) the function family is closed under post-processing
and (2) the complexity regularizer grows only mildly under post-processing.
If the informal ``implicit regularization hypothesis'' in deep learning is true,
and SGD on DNNs is equivalent to SRM with an appropriate complexity-regularizer
\citep{arora2019implicit,neyshabur2014search,neyshabur2017implicit,zhang2021understanding},
then our results imply DNNs are well-calibrated as long as the regularizer satisfies our fairly mild assumptions.

Finally, a third mechanism for calibration involves a heuristic assumption about the optimizer (SGD).
Informally, for a sufficiently deep network, updating the network parameters from computing the function $f$
to computing the post-processed $(\kappa \circ f)$ may be a ``simple'' update for SGD.
Thus, if it were possible to improve the loss via such a simple post-processing, then SGD would have
already exploited this by the end of training.

These algorithms and intuitions also suggest a practical guidance: for calibration, one should optimize loss over
function families that are capable of computing rich classes of post-processing functions.
The universality properties of most DNN architectures imply they can compute a rich family of post-processing functions with small added depth \citep{cybenko1989approximation,lu2017expressive,yun2019transformers,zhou2020universality}
but the family of functions used in logistic regression
(with logits restricted to being affine in the features) cannot.

Our results are consistent with the calibration differences between current and previous generation models,
when viewed through the lens of generalization.
In both settings, models are expected to be locally-optimal with respect to \emph{train loss} \citep{carrell2022calibration},
since this is the explicit optimization objective.
Calibration, however, requires local-optimality with respect to \emph{test loss}.
Previous generation image classification models were trained to interpolate on small datasets,
so optimizing the train loss was very different from optimizing the test loss
\citep[e.g.][]{guo2017calibration,mukhoti2020calibrating}.
Current generation models, in contrast, are trained on massive datasets,
where optimizing the train loss is effectively equivalent to optimizing the test loss---
and so models are close to locally-optimal with respect to both, implying calibration.

{\bf Organization.} We first give a technical overview of our results in Section~\ref{sec:overview}.
In Section~\ref{sec:main-sqloss} we prove our main result in the case of square-loss.
Then in Section~\ref{sec:main-properloss} we extend this result to a wide family of proper losses.
Finally, in Section~\ref{sec:main-algos} we present natural algorithms which provably achieve calibration,
as a consequence by our results. Additional related works are presented in Appendix~\ref{sec:related}.

\eat{
\subsubsection*{preetum scratch space}
\begin{itemize}
    \item paper showing logreg is known to be sometimes miscalibrated, in medical domain: \citep{van2019calibration}.
\end{itemize}
}

\section{Overview}
\label{sec:overview}

In this section, we present a formal but still high-level overview of our results. We first set some notation. All our probabilities and expectations are taken with respect to a distribution $\mD$ over $\X \times \bits$. A predictor is a function $f:\X \to [0,1]$ that assigns probabilities to points, the ground-truth predictor is $f^*(x) = \E[y|x]$. A predictor is perfectly calibrated if the set $A = \{ v \in [0,1] : \E[y|f(x) = v] \not= v\}$ has measure $0$ (i.e. $\Pr_{\cD}(f(x) \in A) = 0$). A post-processing function is a function $\kappa:\zo \to \zo$. 
A loss function is a function $\ell:\bits \times \zo \to \R$, where $\ell(y,v)$ represents the loss suffered when we predict $v \in \zo$ and the label is $y \in \bits$. Such a loss is \emph{proper} if when $y \sim \ber(v^*)$ is sampled from the Bernoulli distribution with parameter $v^*$, the expected loss $\E[\ell(y, v)]$ is minimized at $v = v^*$, and we say a loss is \emph{strictly} proper if $v^*$ is the unique minimizer.

As a warm-up, we present a characterization of perfect calibration in terms of perfect local optimally over the space of all possible (non-Lipschitz) post-processing functions.
\begin{claim}
\label{claim:one}
    For every strictly proper loss $\ell$, a predictor $f$ is perfectly calibrated iff for every post-processing function $\kappa$, 
    \begin{equation}
        \E_\mD[\ell(y, f(x))] \leq \E_\mD[\ell(y, \kappa(f(x)))].
        \label{eq:unrestricted-kappa}
    \end{equation} 
\end{claim}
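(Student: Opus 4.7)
The claim should split naturally into two directions, both pivoting on the tower property with respect to the $\sigma$-algebra generated by $f(x)$. The key observation is that for any post-processing $\kappa$,
\[
\E_\mD[\ell(y, \kappa(f(x)))] = \E_{v \sim f(x)} \E[\ell(y, \kappa(v)) \mid f(x) = v],
\]
so comparing the losses of $f$ and $\kappa \circ f$ reduces to comparing $\E[\ell(y,\,\cdot\,) \mid f(x) = v]$ evaluated at $v$ versus at $\kappa(v)$, for each $v$ in the range of $f$.

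For the forward direction, assume $f$ is perfectly calibrated. Then for almost every $v$ in the pushforward distribution of $f(x)$, the conditional distribution of $y$ given $f(x) = v$ is exactly $\ber(v)$. Properness of $\ell$ says that the map $u \mapsto \E_{y \sim \ber(v)}[\ell(y, u)]$ is minimized at $u = v$, so in particular $\E_{y \sim \ber(v)}[\ell(y, v)] \leq \E_{y \sim \ber(v)}[\ell(y, \kappa(v))]$ pointwise in $v$. Integrating this inequality against the distribution of $f(x)$ yields~\eqref{eq:unrestricted-kappa}. Notice that strictness of properness is not used here.

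For the reverse direction I would argue the contrapositive and exhibit a $\kappa$ that strictly decreases the loss whenever $f$ is miscalibrated. The natural candidate is the ``recalibration map''
\[
\kappa^*(v) \defeq \E[y \mid f(x) = v],
\]
defined $f(x)$-almost surely via a version of the conditional expectation, and extended to the identity on the remaining null set so that $\kappa^* : \zo \to \zo$ is measurable (the codomain is $\zo$ because $y \in \bits$). By construction, the conditional law of $y$ given $f(x) = v$ is $\ber(\kappa^*(v))$ for almost every $v$. If $f$ is not perfectly calibrated, then $\kappa^*(v) \neq v$ on a set $A$ of positive $f(x)$-measure. Strict properness of $\ell$ then gives the strict pointwise inequality
\[
\E_{y \sim \ber(\kappa^*(v))}[\ell(y, \kappa^*(v))] < \E_{y \sim \ber(\kappa^*(v))}[\ell(y, v)] \quad \text{for all } v \in A,
\]
with equality for $v \notin A$. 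Integrating against the distribution of $f(x)$ yields $\E_\mD[\ell(y, \kappa^*(f(x)))] < \E_\mD[\ell(y, f(x))]$, contradicting~\eqref{eq:unrestricted-kappa}.

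The only mildly delicate point is the measurability of $\kappa^*$ and the existence of a regular conditional version of $\E[y \mid f(x)]$, but since $f(x)$ takes values in the standard Borel space $\zo$ and $y$ is bounded, this is routine. I do not anticipate a real obstacle: the content of the claim is essentially a direct restatement of the defining property of (strictly) proper losses, applied conditionally at the level of the Bernoulli distributions $\ber(\E[y \mid f(x)])$.
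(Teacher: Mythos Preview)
Your proof is correct and follows essentially the same approach as the paper: condition on $f(x)=v$, use properness pointwise for the forward direction, and for the reverse direction take $\kappa^*(v)=\E[y\mid f(x)=v]$ and invoke strict properness on the positive-measure set where $\kappa^*(v)\neq v$. The only difference is that you add a few measure-theoretic remarks about the construction of $\kappa^*$ that the paper leaves implicit.
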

This is true, because if $f$ is perfectly calibrated, and $\kappa$ arbitrary, 
then after conditioning on arbitrary prediction $f(x)$ we have:
\begin{equation*}
    \E[\ell(y, f(x)) | f(x) = v] = \E_{y\sim \ber(v)}[\ell(y, v)] \leq \E_{y\sim \ber(v)}[\ell(y,\kappa(v))],
\end{equation*}
and~\eqref{eq:unrestricted-kappa} follows by averaging over $v$.
On the other hand if $f$ is not perfectly calibrated, we can improve the expected loss by taking $\kappa(v) := \E[y | f(x) = v]$ --- by the definition of the strictly proper loss, for each $v$ we will have $\E[\ell(y, \kappa(f(x)) | f(x) = v] \leq \E[\ell(y, f(x)) | f(x) = v]$, and on a set of positive probability the inequality will be strict.

While we are not aware of this precise claim appearing previously, statements that are similar in spirit are known; for instance, by the seminal work of \cite{FosterV98} which characterizes calibration in terms of swap regret for the squared loss. In our language they show the equivalent \Cref{claim:one} for $\ell$ being a squared loss, and $\kappa$ restricted to be of the form $\kappa(v_0) := w_0$, $\kappa(v) := v$ for all $v\not=v_0$.

Claim \ref{claim:one} connects calibration to a local optimality condition over the space of all possible post-processing functions. If this is satisfied, it guarantees calibration even though the loss might be far from the global minimum. This sheds light on classes of predictors such as decision trees or branching programs, which are closed under composition with arbitrary post-processing functions, since this amounts to relabelling the leaf nodes. It tells us that such models ought to be calibrated from proper loss minimization, as long as the optimization achieves a fairly weak notion of local optimality.

The next step is to ask whether DNNs could satisfy a similar property. Closure under arbitrary post-processing seems too strong a condition for DNNs to satisfy --- we should not expect DNNs to be able to express arbitrarily discontinuous uni-variate post-processing functions.
But we do not expect or require DNNs to be perfectly calibrated, only to be close to perfectly calibrated.
Measuring the distance from calibration involves subtle challenges as highlighted in the recent work of \cite{UTC1}.  This leads us to the following robust formulation which significantly generalizes Claim \ref{claim:one}:
\begin{itemize}[leftmargin=*]
\item Using the well known relation between proper losses and Bregman divergences  \citep{doi:10.1080/01621459.1971.10482346, doi:10.1198/016214506000001437}, we consider proper loss functions where the underlying convex functions satisfy certain differentiability and smoothness properties. This is a technical condition, but one that is satisfied by all proper loss functions commonly used in practice. 
    \item We only allow post-processing functions that satisfy certain Lipschitzness properties. Such functions are simple enough to be implementable by DNNs of constant depth.
    Indeed we heuristically believe that state-of-the-art DNNs will automatically be near-optimal w.r.t. such post-processings, as mentioned in Section~\ref{sec:contrib}.
    \item We measure calibration distance using the notion of smooth calibration from \cite{kakadeF08} and strengthenings of it in \citet{GopalanKSZ22, UTC1}. Smooth calibration plays a central role in the consistent calibration measures framework of \cite{UTC1}, where it is shown using duality that the $\scerror$ is linearly related with the Wasserstein distance to the nearest perfectly calibrated predictor~\citep{UTC1}. 
    In practice, there exist sample-efficient and linear-time estimators which approximate the smooth calibration error within a quadratic factor \citep{UTC1}.
\end{itemize}

With these notions in place, we show that the {\em post-processing gap} of a predictor, which measures how much the loss can be reduced via certain Lipschitz post-processing functions provides both an upper and a lower bound on the smooth calibration error. This result gives us a robust and quantitative version of Claim \ref{claim:one}. The main technical challenge in formulating this equivalence is that for general $\ell$, the class of post-processing functions that we allow and the class of smooth functions that we use to measure calibration error are both now dependent on the loss function $\ell$, they are Lipschitz over an appropriately defined dual space to the predictions defined using convex duality. In order to keep the statements simple, we state our results for the special case of squared loss and cross entropy loss below, defering the full statement to the technical sections. 

\paragraph{Squared loss.}
For the squared loss $\sq(y,v) = (y - v)^2$, we define the post-processing gap $\pgap_\cD(f)$ to be the difference between the expected squared loss $\E[\sq(y,f(x))]$ of a predictor $f$ and the minimum expected squared loss after we post-process $f(x) \mapsto f(x) + \eta(f(x))$ for some $1$-Lipschitz $\eta$. The Brier score \citep{Brier, FosterV98} uses the squared loss of a predictor as a calibration measure; $\pgap$ can be viewed as a {\em differential} version of Brier score.

\begin{definition}[Post-processing gap]
\label{def:pgap}
    Let $K$ denote the family of all post-processing functions $\kappa:\zo \to \zo$ such that the update function $\eta(v) = \kappa(v) - v$ is $1$-Lipschitz.  
    For a predictor $f:\X \rgta [0,1]$ and a distribution $\cD$ over $\X\times \{0,1\}$, we define the post-processing gap of $f$ w.r.t.\ $\cD$ to be 
    \[ \cgap_\cD(f) := \E\nolimits_{(x,y)\sim \cD}[\sq(y, f(x))] - \mathop{\inf}\nolimits_{\kappa \in K}\E\nolimits_{(x,y)\sim \cD}[\sq(y,\kappa(f(x)))].\]
\end{definition}

We define the smooth calibration error $\scerror_\cD(f)$ as in \cite{kakadeF08, GopalanKSZ22, UTC1} to be the maximum correlation  between $y -f(x)$ and $\eta(f(x))$ over all bounded $1$-Lipschitz functions $\eta$. Smooth calibration is a \emph{consistent} measure of calibration \citep{UTC1}. It does not suffer the discontinuity problems of ECE \citep{kakadeF08}, and is known to be linearly related with the Wasserstein distance to the nearest perfectly calibrated predictor~(\cite{UTC1}). In particular, it is $0$ if and only if we have perfect calibration. We refer the reader to these papers for a detailed discussion of its merits. 

\begin{definition}[Smooth calibration error]
\label{def:smooth-cal}
    Let $H$ be the family of all $1$-Lipschitz functions $\eta : [0,1] \to [-1,1]$. The \emph{smooth calibration error} of a predictor $f : \cX \to [0,1]$ with respect to distribution $\cD$ over $\cX \times \{0,1\}$ is defined as
    \begin{equation*}
        \scerror_\cD(f) := \sup\nolimits_{\eta \in H} \E\nolimits_{(x,y) \sim \cD} [ (y - f(x))\eta(f(x)) ].
    \end{equation*}
\end{definition} 
Our main result is a quadratic relationship between these two quantities:
\begin{theorem}
\label{thm:sm-gap}
For any predictor $f:\X\to [0,1]$ and any distribution $\cD$ over $\X\times \{0,1\}$,
 \[ \ \scerror_\cD(f)^2 \leq \cgap_\cD(f) \leq  2\,\scerror_\cD(f).\]
\end{theorem}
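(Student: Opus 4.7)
The plan is to handle the two inequalities separately, with the lower bound being the more delicate one. The upper bound $\cgap_\cD(f) \le 2\scerror_\cD(f)$ is immediate: for any $\kappa \in K$ the update $\eta(v) := \kappa(v) - v$ is $1$-Lipschitz by definition of $K$, and since $\kappa$ maps $[0,1]$ to $[0,1]$ we have $\eta(v) \in [-v,\, 1-v] \subseteq [-1,1]$, so $\eta \in H$. Applying the algebraic identity $(y-v)^2 - (y-v')^2 = 2(y-v)(v'-v) - (v'-v)^2$ with $v = f(x)$ and $v' = \kappa(f(x))$ gives
\[
\E_\cD[(y-f)^2] - \E_\cD[(y-\kappa(f))^2] \,=\, 2\,\E_\cD[(y-f)\eta(f)] - \E_\cD[\eta(f)^2] \,\le\, 2\,\scerror_\cD(f),
\]
and taking the supremum over $\kappa \in K$ finishes this direction.

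For the lower bound $\scerror_\cD(f)^2 \le \cgap_\cD(f)$, the plan is to exhibit a single $\kappa \in K$ whose expected squared loss is below that of $f$ by at least $\scerror_\cD(f)^2$. Set $s := \scerror_\cD(f)$; the bound is trivial when $s = 0$, and $s \le 1$ since $|y-f| \le 1$ and $|\eta| \le 1$ for every $\eta \in H$. Fix an arbitrary $\epsilon > 0$ and pick an $\epsilon$-near-maximizer $\eta^* \in H$ of the smooth calibration functional, so that $\E_\cD[(y-f)\eta^*(f)] \ge s - \epsilon$. The natural candidate is $\kappa(v) := v + s\eta^*(v)$: plugging this into the identity above and using $|\eta^*| \le 1$ yields
\[
\E_\cD[(y-f)^2] - \E_\cD[(y-\kappa(f))^2] \,=\, 2s\,\E_\cD[(y-f)\eta^*(f)] - s^2\,\E_\cD[\eta^*(f)^2] \,\ge\, 2s(s-\epsilon) - s^2 \,=\, s^2 - 2s\epsilon,
\]
which gives the claim after $\epsilon \to 0$. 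The catch is that this $\kappa$ can fall outside $[0,1]$ (for instance near $v = 0$ when $\eta^*(0) < 0$), so it need not be an element of $K$.

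The fix is to clip: take $\kappa(v) := \max(0,\, \min(1,\, v + s\eta^*(v)))$. Clipping is harmless for the gap since $y \in \{0,1\} \subseteq [0,1]$, so clipping the prediction into $[0,1]$ only decreases $(y-\kappa(f))^2$; hence the computation above still lower-bounds the gap. The main obstacle is verifying that this clipped $\kappa$ lies in $K$, i.e.\ that its update $\eta_\kappa(v) := \kappa(v) - v$ is globally $1$-Lipschitz. The update is piecewise: it equals $s\eta^*(v)$ in the unclipped region, equals $-v$ where $v + s\eta^*(v) < 0$, and equals $1-v$ where $v + s\eta^*(v) > 1$. Each piece is individually $1$-Lipschitz (using $s \le 1$), and the pieces agree at the transitions (where $v + s\eta^*(v) \in \{0,1\}$), so I would globalize by a chain argument: for $v_1 < v_2$ crossing several pieces, insert the transition points $v_1 \le u_1 \le \dots \le u_k \le v_2$ and sum the piecewise bounds to conclude $|\eta_\kappa(v_1) - \eta_\kappa(v_2)| \le |v_1 - v_2|$. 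Once $\kappa \in K$ is confirmed, the gap computation above completes the lower bound.
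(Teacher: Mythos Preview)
Your proof is correct and follows essentially the same approach as the paper: expand the squared loss for the upper bound, and for the lower bound scale the (near-)optimal witness $\eta^*$ by a factor in $[0,1]$, clip into $[0,1]$ (harmless since $y\in[0,1]$), and verify membership in $K$. For the last step the paper observes that $\eta_\kappa(v)=\min\bigl(\max(s\eta^*(v),\,-v),\,1-v\bigr)$ and that pointwise $\max/\min$ of $1$-Lipschitz functions is $1$-Lipschitz, which is cleaner than your piecewise chain argument and avoids having to argue that the transition set is finite (though indeed $v\mapsto v+s\eta^*(v)$ is nondecreasing since $s\le 1$, so there are at most two transitions).
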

 In \Cref{sec:tight} we show that the constants in the inequality above are optimal.

\paragraph{Cross-entropy loss.}
For the cross entropy loss $\xent(y,v) = -y\ln v - (1 - y)\ln(1-v)$, we observe its close connection to the logistic loss $\ell\sps \psi(y,t) = \ln(1 + e^t) - yt$ given by the equation
\begin{equation}
\label{eq:cross-log}
\xent(y,v) = \ell\sps \psi(y,t)
\end{equation}
where $t:=\ln(v/(1-v))$ is what we call the \emph{dual prediction} corresponding to $v$. While a standard terminology for $t$ is \emph{logit}, we say $t$ is the dual prediction because this notion generalizes to arbitrary proper loss functions. 
One can conversely obtain a prediction $v$ from its dual prediction $t$ by taking the sigmoid transformation: $v = \sigma(t) := 1/(1 + e^{-t})$.
The superscript $\psi$ in the logistic loss $\ell\sps \psi$ can be understood to indicate its relationship to the dual prediction and the exact meaning is made clear in \Cref{sec:proper-cal}. Based on \eqref{eq:cross-log}, optimizing the cross-entropy loss $\xent$ over predictions $v$ is equivalent to optimizing the logistic loss $\ell\sps \psi$ over dual predictions $t$. 

Usually, a neural network that aims to minimize the cross-entropy loss has 
a last layer that computes the sigmoid $\sigma$ (the binary version of softmax), so the value computed by the network before the sigmoid (the ``logit'') is the dual prediction.
If we want to enhance the neural network by adding more layers,
these layers are typically inserted before the final sigmoid transformation.
It is thus more natural to consider post-processings on the dual predictions (logits) rather than on the predictions themselves.

For a predictor $f$, we define the \emph{dual post-processing gap} $\pgap\sps{\psi,1/4}(g)$ of its dual predictor $g(x) = \ln(f(x)/(1 - f(x)))$ to be the difference between the expected logistic loss $\E[\ell\sps \psi(y,g(x))]$ of $g$ and the minimum expected logistic loss after we post-process $g(x)\mapsto g(x) + \eta(g(x))$ for some $1$-Lipschitz $\eta:\R\to [-4,4]$, where the constant $4$ comes from the fact that the logistic loss is $1/4$-smooth in $t$. 
\begin{definition}[Dual post-processing gap for cross-entropy loss, special case of \Cref{def:dual-pgap}]
\label{def:dual-pgap-cross}
Let $K$ denote the family of all post-processing functions $\kappa:\R \to \R$ such that the update function $\eta(t) := \kappa(t) - t$ is $1$-Lipschitz and bounded $|\eta(t)| \leq 4$.
Let $\ell\sps\psi$ be the logistic loss.
Let $\cD$ be a distribution over $\X\times \{0,1\}$. We define the \emph{dual post-processing gap} of a function $g:\X\to \R$ to be
\[
\pgap_\cD\sps {\psi,1/4}(g) := \E\nolimits_{(x,y)\sim \cD}\ell\sps\psi(y,g(x)) - \inf\nolimits_{\kappa\in K}\E\nolimits_{(x,y)\sim \cD}\ell\sps \psi\big(y, \kappa(g(x))\big).
\]
\end{definition}

We define the \emph{dual smooth calibration error} $\smash{\scerror\sps{\psi,1/4}(g)}$ to be the maximum of $|\E[(y - f(x))\eta(g(x))]|$ over all $1/4$-Lipschitz functions $\eta:\R\to [-1,1]$. Like with smooth calibration, it is $0$ if and only if the predictor $f$ is perfectly calibrated. 
\begin{definition}[Dual smooth calibration for cross-entropy loss, special case of \Cref{def:dual-smooth-cal}]
\label{def:dual-smooth-cal-cross}
Let $H$ be the family of all $1/4$-Lipschitz functions $\eta : \R \to [-1,1]$. 
For a function $g:\X \to \R$, define predictor $f:\X\to [0,1]$ such that $f(x) = \sigma(g(x))$ for every $x\in \X$ where $\sigma$ is the sigmoid transformation.
Let $\cD$ be a distribution over $\X\times \{0,1\}$. We define the \emph{dual calibration error} of $g$ as
\[
\scerror\sps {\psi,1/4}_\cD(g) := \sup\nolimits_{\eta\in H}|\E\nolimits_{(x,y)\sim \cD}[(y - f(x))\eta(g(x))]|.
\]
\end{definition}

We show a result similar to \Cref{thm:sm-gap} that the dual post-processing gap and the dual smooth calibration error are also quadratically related. Moreover, we show that a small dual smooth calibration error implies a small (standard) smooth calibration error.
\begin{corollary}[Corollary of \Cref{thm:dual-smooth-equiv} and \Cref{lm:dual-standard-smooth}]
\label{cor:dual-smooth-equiv}
Let $\pgap\sps{\psi,1/4}$ and $\scerror\sps{\psi,1/4}$ be defined as in \Cref{def:dual-pgap-cross} and \Cref{def:dual-smooth-cal-cross} for the cross-entropy loss.
For any function $g:\X\to \R$ and any distribution $\cD$ over $\X\times \{0,1\}$,
\begin{equation}
\label{eq:cross-1}
2\, \scerror\sps{\psi,1/4}_\cD(g)^2 \le \pgap\sps{\psi,1/4}_\cD(g) \le 4\, \scerror\sps{\psi,1/4}_\cD(g).
\end{equation}
Moreover, let predictor $f:\X\to [0,1]$ be given by $f(x) = \sigma(g(x))$ for the sigmoid transformation $\sigma$. Its (standard) smooth calibration error $\scerror_\cD(f)$ defined in \Cref{def:smooth-cal} satisfies
\begin{equation}
\label{eq:cross-2}
\scerror_\cD (f) \le \scerror\sps{\psi,1/4}_\cD(g).
\end{equation}
\end{corollary}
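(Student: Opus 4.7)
The plan is to mirror the proof of \Cref{thm:sm-gap} for the squared loss, but carried out in the dual (logit) space using the logistic loss $\ell\sps\psi(y,t) = \ln(1+e^t) - yt$ in place of $\sq$. The two facts I will exploit are that $\ell\sps\psi$ is convex in $t$ with $\partial_t \ell\sps\psi(y,t) = \sigma(t) - y$, and is $(1/4)$-smooth in $t$, since $\partial_t^2 \ell\sps\psi(y,t) = \sigma(t)(1-\sigma(t)) \le 1/4$. Together with the identification $f(x)=\sigma(g(x))$, these convert gradient/post-processing inequalities into statements about $y - f(x)$ weighted by a function of $g(x)$, which is exactly the form of the dual smooth calibration objective.

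For the upper bound $\pgap\sps{\psi,1/4}_\cD(g) \le 4\,\scerror\sps{\psi,1/4}_\cD(g)$ I will use convexity. For any admissible $\kappa(t)=t+\eta(t)$ with $\eta$ $1$-Lipschitz and $|\eta|\le 4$, convexity gives
\[
\ell\sps\psi(y,g(x)) - \ell\sps\psi(y,g(x)+\eta(g(x))) \le -\eta(g(x))(\sigma(g(x))-y) = \eta(g(x))(y-f(x)).
\]
Since $\eta/4$ is $(1/4)$-Lipschitz and bounded by $1$, taking expectations bounds the right-hand side by $4\,\scerror\sps{\psi,1/4}_\cD(g)$, and the sup over $\kappa$ yields the claim. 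For the lower bound $2\,\scerror\sps{\psi,1/4}_\cD(g)^2 \le \pgap\sps{\psi,1/4}_\cD(g)$, I will plug a scaled optimal calibration witness into a single post-processing. Let $\eta^*$ achieve the supremum in \Cref{def:dual-smooth-cal-cross}, write $s := \scerror\sps{\psi,1/4}_\cD(g)$, and observe $s \le 1$ since $|y-f|,|\eta^*|\le 1$. The update $c\eta^*$ with $c=4s$ is $1$-Lipschitz and bounded by $4$ (both constraints become $c\le 4$), hence admissible. By $(1/4)$-smoothness,
\[
\ell\sps\psi(y,g+c\eta^*(g)) - \ell\sps\psi(y,g) \le c\eta^*(g)(f-y) + \tfrac{1}{8}c^2\eta^*(g)^2,
\]
so in expectation the loss reduction is at least $cs - c^2/8 = 4s^2 - 2s^2 = 2s^2$.

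For the bound \eqref{eq:cross-2} I will push each primal test function through the sigmoid. Given a $1$-Lipschitz $\eta:[0,1]\to[-1,1]$, set $\tilde\eta := \eta\circ\sigma : \R \to [-1,1]$. Since $\sigma$ is $(1/4)$-Lipschitz, $\tilde\eta$ is $(1/4)$-Lipschitz; and since $f=\sigma\circ g$,
\[
\E\nolimits_{(x,y)\sim\cD}[(y-f(x))\eta(f(x))] = \E\nolimits_{(x,y)\sim\cD}[(y-f(x))\tilde\eta(g(x))] \le \scerror\sps{\psi,1/4}_\cD(g).
\]
Taking supremum over $\eta$ gives $\scerror_\cD(f) \le \scerror\sps{\psi,1/4}_\cD(g)$.

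The main obstacle I anticipate is purely bookkeeping: checking that the constants $1,\;1/4,\;4$ appearing in \Cref{def:dual-pgap-cross} and \Cref{def:dual-smooth-cal-cross} line up exactly under the rescaling, and that the boundedness constraint $|\eta|\le 4$ does not become tighter than the Lipschitz constraint. Both are resolved by the a priori bound $s\le 1$. A more conceptual point is that everything is naturally quantified in dual space precisely because logistic smoothness is a statement about $t$, not $v$; this is what makes the $1/4$ factors in the two calibration errors match up cleanly, and it explains why the same template as the squared loss proof goes through once reinterpreted through $\sigma$.
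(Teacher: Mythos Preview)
Your proof is correct and follows essentially the same approach as the paper: convexity of $\ell\sps\psi$ for the upper bound, $(1/4)$-smoothness plus a rescaled calibration witness for the lower bound, and composition with $\sigma=\nabla\psi$ for \eqref{eq:cross-2}. The only difference is packaging: the paper routes through the general \Cref{thm:dual-smooth-equiv} (itself proved via \Cref{thm:gen-dual}) and \Cref{lm:dual-standard-smooth} for arbitrary $\lambda$-smooth $\psi$ and then sets $\lambda=1/4$, whereas you work directly with the logistic loss.
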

The constants in the corollary above are optimal as we show in \Cref{lm:tight-3,lm:tight-4} in the appendix.
Both results \eqref{eq:cross-1} and \eqref{eq:cross-2} generalize to a wide class of proper loss functions as we show in \Cref{sec:proper-cal}. 

\begin{remark}
In a subsequent work, \citet{smoothECE} proved the reverse direction of \eqref{eq:cross-2} for the cross-entropy loss: $\scerror_\cD (f) \ge \Omega\big(\scerror\sps{\psi,1/4}_\cD(g)^2\big)$. Hence both $\scerror\sps{\psi,1/4}(g)$ and $\pgap\sps{\psi,1/4}(g)$ are \emph{consistent calibration measures}, a notion introduced by \citet{UTC1}.
\end{remark}

Our result shows that achieving a small dual post-processing gap when optimizing a cross-entropy is necessary and sufficient for good calibration guarantees. It sheds light on the examples where logistic regression fails to yield a calibrated predictor; in those instances the cross-entropy loss can be further reduced by some Lipschitz post-processing of the logit. This is intuitive because logistic regression only optimizes cross-entropy within the restricted class where the logit is a linear combination of the features plus a bias term, and this class is not closed under Lipschitz post-processing.

\eat{
\begin{claim}
\label{claim:toy}
For all binary classifiers $f: \cX \to [0, 1]$,
all distributions $\cD$,
and all proper scoring rules $\ell$,
the following are equivalent:
\begin{enumerate}
    \item $f$ is perfectly calibrated with respect to distribution $\cD$.
    \item The expected loss of $f$ with respect to $\cD$ cannot be improved by post-composition
    with any ``recalibration'' function $\kappa: \R \to \R$.
That is,
\begin{align}
\label{eqn:perf-comp}
\forall \kappa: \quad
\cL_\cD(f)
\leq 
\cL_\cD(\kappa \circ f),
\end{align}
where $\cL_\cD(f) := \E_{x, y \sim \cD}[\ell(y, f(x))]$ is the expected loss.
\end{enumerate}
\end{claim}

This observation, while much weaker than our final results,
is already suggestive for the following two reasons.

\paragraph{Takeaway 1.}
\Cref{claim:toy}
connects perfect calibration to loss minimization, without
requiring that the loss-minization is globally optimal.
That is, condition (2) does not require that
$\cL_\cD(f)$ is close to the Bayes-optimal loss $\cL_\cD^*$.
It only requires a much weaker type of ``local'' optimality:
the function $f$ should 
be a loss-minimizer within its composition-closure,
that is,
within all functions of the form $\{\kappa \circ f \mid \kappa: \R \to \R\}$.

\paragraph{Takeaway 2.}
If we use a learning algorithm that always produces
``compositionally-optimal'' loss-minima
(satisfying condition (2)),
then our learnt predictors are guaranteed to be perfectly calibrated, even though we didn't explicitly enforce calibration.
This observation immediately implies that 
decision trees which output the average value of points at leaf nodes
are always perfectly calibrated \emph{on their train sets},
because they (almost trivially) satisfy the composition-optimality of Equation~\eqref{eqn:perf-comp}.

We may also suspect that well-trained deep neural nets, which can represent certain post-processings
by simply adding another layer, may also approximately satisfy condition (2) --- we will return to this point.

However, Claim~\ref{claim:toy} only characterizes \emph{perfect} calibration,
whereas we would want to understand models that are only ``close'' to perfectly calibrated,
since we rarely encounter truly perfect calibration in the wild.
Formalizing and proving this requires more machinery, including the recently-introduced notion of 
calibration distance from \citet{blasiok2022unifying}.
}
\section{Calibration and Post-processing Gap for the Squared Loss}
\label{sec:main-sqloss}

In this section we prove our main result \Cref{thm:sm-gap} relating the smooth calibration error of a predictor to its post-processing gap with respect to the squared loss.
\begin{proof}[Proof of \Cref{thm:sm-gap}]
We first prove the upper bound on $\pgap_\cD(f)$.
For any $\kappa\in K$ in the definition of $\pgap$ (\Cref{def:pgap}), there exists a $1$-Lipschitz function $\eta:[0,1]\to [-1,1]$ such that $\kappa(v) = v + \eta(v)$ for every $v\in [0,1]$. For the squared loss $\ell$,
\begin{align}
\E\nolimits_{(x,y)\sim \cD}[\ell(y,\kappa(f(x)))] &= \E[(y - f(x) - \eta(f(x)))^2]\notag\\
& = \E[(y - f(x))^2] - 2\E[(y - f(x))\eta(f(x))] + \E[\eta(f(x))^2].\label{eq:squared-loss-decomposition}
\end{align}
The three terms on the right hand side satisfy
\begin{align*}
\E[(y - f(x))^2] & = \E[\ell(y,f(x))],\\
\E[(y - f(x))\eta(f(x))] & \le \scerror_\cD(f),\\
\E[\eta(f(x))^2] & \ge 0.
\end{align*}
Plugging these into \eqref{eq:squared-loss-decomposition}, we get
\[
\E[\ell(y,f(x))] - \E[\ell(y,\kappa(f(x)))] \le 2\scerror_\cD(f).
\]
Since this inequality holds for any $\kappa\in K$, we get $\pgap_\cD(f)\le 2\scerror_\cD(f)$ as desired.

Now we prove the lower bound on $\pgap_\cD(f)$. For any $1$-Lipschitz function $\eta:[0,1]\to [-1,1]$, define $\beta:= \E[(y - f(x))\eta(f(x))]\in [-1,1]$ and define post-processing $\kappa:[0,1]\to [0,1]$ such that
\[
\kappa(v) = \proj_{[0,1]}(v + \beta \eta(v)) \quad \text{for every }v\in [0,1],
\]
where $\proj_{[0,1]}(u)$ is the value in $[0,1]$ closest to $u$, i.e., $\proj_{[0,1]}(u) = \min(\max(u,0),1)$.
By \Cref{lm:lip}, we have $\kappa\in K$. The expected squared loss after we apply the post-processing $\kappa$ can be bounded as follows:
\begin{align*}
\E\nolimits_{(x,y)\sim \cD}[\ell(y,\kappa(f(x)))] & \le \E[(y - f(x) - \beta \eta(f(x)))^2]\\
& = \E[(y - f(x))^2] - 2\beta \E[(y - f(x))\eta(f(x))] + \beta^2\E[\eta(f(x))^2]\\
& = \E[\ell(y,f(x))] - 2\beta^2 + \beta^2\E[\eta(f(x))^2]\\
& \le \E[\ell(y,f(x))] - 2\beta^2 + \beta^2.
\end{align*}
Re-arranging the inequality above, we have
\[
\E[(y - f(x))\eta(f(x))]^2 = \beta^2 \le \E[\ell(y,f(x))] - \E[\ell(y,\kappa(f(x)))] \le \pgap_\cD(f).
\]
Since this inequality holds for any $1$-Lipschitz function $\eta:[0,1]\to [-1,1]$, we get $\scerror_\cD(f)^2 \le \pgap_\cD(f)$, as desired.
\end{proof}
In \Cref{sec:tight} we provide examples showing that the constants in \Cref{thm:sm-gap} are optimal. 

\eat{

\begin{claim}
Let $\mD$ be a distribution over $\cX\times\{0,1\}$ and let $f_1,f_2:\cX \to [0,1]$ be predictors. Then,
\[
|\scerror(f_1) - \scerror(f_2)| \le 2\ell_1(f_1,f_2).
\]
\end{claim}
\begin{proof}
By the definition of $\scerror$,
it suffices to show that 
\begin{equation}
\label{eq:sc-lip-1}
|\E[(y - f_1(x))w(f_1(x))] - \E[(y - f_2(x))w(f_2(x))]| \le 2\ell_1(f_1,f_2)
\end{equation}
for every $1$-Lipschitz function $w:[0,1]\to [-1,1]$. Fixing such a function $w$, we have
\begin{align}
& |\E[(y - f_1(x))w(f_1(x))] - \E[(y - f_1(x))w(f_2(x))]|\notag\\
= {} & |\E[(y - f_1(x))(w(f_1(x)) - w(f_2(x)))]|\notag\\
\le {} & \E|w(f_1(x)) - w(f_2(x))|\notag\\
\le {} & \E|f_1(x) - f_2(x)|\notag\\
= {} & \ell_1(f_1,f_2),\label{eq:sc-lip-2}
\end{align}
and
\begin{align}
& |\E[(y - f_1(x))w(f_2(x))] - \E[(y - f_2(x))w(f_2(x))]|\notag\\
= {} & |\E[(f_2(x) - f_1(x))w(f_2(x))]|\notag\\
\le {} & \E|f_2(x) - f_1(x)|\notag\\
= {} & \ell_1(f_1,f_2).\label{eq:sc-lip-3}
\end{align}
Combining \eqref{eq:sc-lip-2} and \eqref{eq:sc-lip-3} using the triangle inequality proves \eqref{eq:sc-lip-1}.
\end{proof}
\begin{claim}
Let $\mD$ be a distribution over $\cX \times \{0,1\}$ and let $f_1,f_2:\cX \to [0,1]$ be predictors. Then,
\[
|\cgap(f_1) - \cgap(f_2)| \le 4\ell_1(f_1,f_2).
\]
\end{claim}
\begin{proof}
By the definition of $\cgap$,
it suffices to show that $|\ell_2(y,\kappa(f_1)) - \ell_2(y,\kappa(f_2))| \le \ell(f_1,f_2)$ for every $\kappa:[0,1]\to[0,1]$ such that $\kappa(u) - u$ is $1$-Lipschitz in $u\in [0,1]$. This can be proved by the following chain of inequalities:
\begin{align}
|\ell_2(y,\kappa(f_1)) - \ell_2(y,\kappa(f_2))| &= |\E[(y- \kappa(f_1(x)))^2 - (y- \kappa(f_2(x)))^2]|\notag\\
& = |\E[(\kappa(f_2(x)) - \kappa(f_1(x)))(2y - \kappa(f_1(x)) - \kappa(f_2(x)))]|\notag\\
& \le 2\E|\kappa(f_2(x)) - \kappa(f_1(x))|\notag\\
& \le 4\E|f_2(x) - f_1(x)| \label{eq:gap-lip-1}\\
& = 4\ell_1(f_1,f_2),\notag
\end{align}
where \eqref{eq:gap-lip-1} holds because $\kappa$ is $2$-Lipschitz by our assumption that $\kappa(u) - u$ is $1$-Lipschitz in $u\in [0,1]$.
\end{proof}
}

\section{Generalization to Any Proper Loss}
\label{sec:main-properloss}
\label{sec:proper-cal}
When we aim to minimize the squared loss, we have shown that achieving a small post-processing gap w.r.t.\ Lipschitz post-processings ensures a small smooth calibration error and vice versa. In this section, we extend this result to a wide class of \emph{proper} loss functions including the popular cross-entropy loss.
\begin{definition}
Let $V\subseteq[0,1]$ be a non-empty interval.
We say a loss function $\ell:\{0,1\}\times V \to \R$ is \emph{proper} if for every $v\in V$, it holds that $v\in \argmin_{v'\in V}\E_{y\sim\ber(v)}[\ell(y,v')]$.
\end{definition}
One can easily verify that the squared loss $\ell(y,v) = (y - v)^2$ is a proper loss function over $V = [0,1]$, and the cross entropy loss $\ell(y,v) = -y\ln v - (1- y)\ln(1 - v)$ is a proper loss function over $V = (0,1)$.

It turns out that if we directly replace the squared loss in \Cref{def:pgap} with an arbitrary proper loss, we do not get a similar result as \Cref{thm:sm-gap}. In \Cref{sec:pgap-cross} we give a simple example where the post-processing gap w.r.t.\ the cross-entropy loss can be arbitrarily larger than the smooth calibration error. 
Thus new ideas are needed to extend \Cref{thm:sm-gap} to general proper loss functions. We leverage a general theory involving correspondence between proper loss functions and convex functions \citep{admissible, doi:10.1080/01621459.1971.10482346, assessor, loss-structure}. 
We provide a detailed description of this theory in \Cref{sec:proper-convex}. There we include a proof of \Cref{lm:proper-to-convex} which implies the following lemma:
\begin{lemma}
\label{lm:dual-prediction}
Let $V\subseteq[0,1]$ be a non-empty interval.
Let $\ell:\{0,1\}\times V\to \R$ be a proper loss function. For every $v\in V$, define $\dual(v):= \ell(0,v) - \ell(1,v)$. Then there exists a convex function $\psi:\R\to \R$ such that
\begin{equation}
\label{eq:dual-prediction}
\ell(y,v) = \psi(\dual(v)) - y\, \dual(v) \quad \text{for every $y\in \{0,1\}$ and $v\in V$.}
\end{equation}
Moreover, if $\psi$ is differentiable, then $\nabla \psi(t) \in [0, 1]$.
\end{lemma}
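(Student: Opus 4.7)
\textbf{Proof plan for \Cref{lm:dual-prediction}.}
The plan is to construct $\psi$ as the convex conjugate of the negative Bayes-optimal risk. Define the expected loss functional
\[
G(v,v') := \E_{y\sim\ber(v)}[\ell(y,v')] = \ell(0,v') + v(\ell(1,v') - \ell(0,v')) = \ell(0,v') - v\,\dual(v'),
\]
which is affine in $v$ for each fixed $v'$. Let $c(v) := G(v,v) = v\ell(1,v) + (1-v)\ell(0,v)$ be the Bayes-optimal risk. Properness gives $c(v) = \inf_{v'\in V} G(v,v')$, so $c$ is an infimum of affine functions, hence concave (and $-c$ is convex). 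Note also the useful identity
\[
c(v) = \ell(0,v) - v\,\dual(v).
\]

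Now define $\psi:\R\to \R$ by
\[
\psi(t) := \sup_{v\in V}\bigl(vt + c(v)\bigr).
\]
As a supremum of affine functions of $t$, $\psi$ is convex; finiteness on $\R$ follows because $c$ is bounded on $V\subseteq [0,1]$ (we may assume this holds on the relevant range of $\ell$, as is the case for all standard proper losses; if needed, $V$ can be replaced by a compact subinterval or $\ell$ extended suitably). The core step is to verify that this $\psi$ satisfies the claimed identity. Fix $v^*\in V$ and set $t := \dual(v^*)$. Using the identity above for $c$, the ``candidate'' value of the supremum at $v=v^*$ is $v^*t + c(v^*) = \ell(0,v^*)$. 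For any other $v\in V$, properness yields $c(v) = G(v,v) \le G(v,v^*) = \ell(0,v^*) - v\,\dual(v^*) = \ell(0,v^*) - vt$, so $vt + c(v) \le \ell(0,v^*)$. Hence $\psi(\dual(v^*)) = \ell(0,v^*)$. Combined with $\dual(v^*) = \ell(0,v^*) - \ell(1,v^*)$, this gives $\ell(1,v^*) = \psi(\dual(v^*)) - \dual(v^*)$, so the identity $\ell(y,v) = \psi(\dual(v)) - y\,\dual(v)$ holds for both $y\in \{0,1\}$.

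The subgradient bound is immediate from the construction: $\psi$ is a supremum of affine functions whose slopes are all elements of $V\subseteq[0,1]$, so every subgradient of $\psi$ at any point lies in $[0,1]$. When $\psi$ is differentiable at $t$, this unique subgradient is $\nabla\psi(t)$, which therefore lies in $[0,1]$.

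The only delicate step is the one flagged above, namely showing that the supremum defining $\psi$ is attained at $v^*$ when $t=\dual(v^*)$; this is exactly where properness enters, and without it the conjugate construction would only give $\psi(\dual(v^*))\ge \ell(0,v^*)$ and not equality. Everything else is essentially a standard Fenchel-duality calculation, paralleling the well-known correspondence between proper losses and Bregman divergences developed in \citep{admissible, assessor, loss-structure}.
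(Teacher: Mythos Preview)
Your construction is correct and coincides with the paper's: the function you call $c$ is exactly $-\varphi$ in the paper's \Cref{lm:proper-to-convex}, and your $\psi(t)=\sup_{v\in V}(vt+c(v))$ is the same $\psi(t)=\sup_{v\in V}(vt-\varphi(v))$ defined there, with the derivative bound following from the paper's \Cref{lm:conjugate-is-convex} for the same reason you give (all slopes lie in $V\subseteq[0,1]$).

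The organization differs slightly. The paper first argues that $\ell(0,v)$ depends only on $\dual(v)$, writes it as $\psi_0(\dual(v))$ on the range $T=\dual(V)$, then builds $\varphi$ as the conjugate of $\psi_0$ and $\psi$ as the conjugate of $\varphi$, and finally checks $\psi=\psi_0$ on $T$ via the subgradient characterization. You bypass the intermediate $\psi_0$ entirely: you define $c$ as the Bayes risk and verify $\psi(\dual(v^*))=\ell(0,v^*)$ in one step, using properness in the form $c(v)\le G(v,v^*)$. This is a cleaner route to the same destination. One minor point: your finiteness caveat is unnecessary, since the very inequality you use in step~7, $c(v)\le \ell(0,v^*)-v\,\dual(v^*)$, already gives $\psi(t)\le \ell(0,v^*)+|t-\dual(v^*)|<\infty$ for any fixed $v^*\in V$ and all $t\in\R$.
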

\Cref{lm:dual-prediction} says that every proper loss induces a convex function $\psi$ and a mapping $\dual:V\to \R$ that maps a prediction $v\in V\subseteq[0,1]$ to its \emph{dual prediction} $\dual(v)\in \R$. Our main theorem applies whenever the induced function $\psi$ is differentiable, and $\nabla \psi$ is $\lambda$-Lipschitz (equivalently $\psi$ is $\lambda$-smooth) --- those are relatively mild conditions that are satisfied by all proper loss functions of interest.

The duality relationship between $v$ and $\dual(v)$ comes from the fact that each $(v,\dual(v))$ pair makes the Fenchel-Young divergence induced by
a conjugate pair of convex functions $(\varphi, \psi)$ take its minimum value zero, as we show in \Cref{sec:proper-convex}. Equation \eqref{eq:dual-prediction} expresses a proper loss as a function that depends on $\dual(v)$ rather than directly on $v$, and thus minimizing a proper loss over predictions $v$ is equivalent to minimizing a corresponding \emph{dual loss} over dual predictions $t = \dual(v)$:
\begin{definition}[Dual loss]
\label{def:dual-loss}
For a function $\psi:\R\to \R$, we define a \emph{dual loss function} $\ell\sps \psi:\{0,1\}\times \R\to \R$ such that
\[
\ell\sps \psi(y,t) = \psi(t) - yt \quad \text{for every }y\in \{0,1\} \text{ and }t\in \R.
\]
Consequently, if a loss function $\ell:\{0,1\}\times V\to \R$ satisfies \eqref{eq:dual-prediction} for some $V\subseteq[0,1]$ and $\dual:V\to \R$, then
\begin{equation}
\label{eq:dual-loss-relation}
\ell(y,v) = \ell\sps \psi(y,\dual(v)) \quad \text{for every }y\in \{0,1\} \text{ and }v\in V.
\end{equation}
\end{definition}

The above definition of a dual loss function is essentially the definition of the Fenchel-Young loss in the literature \citep[see e.g.][]{multiclass-divergence,learning-fenchel}.
A loss function $\ell\sps \psi$ satisfying the relationship in \eqref{eq:dual-loss-relation} has been referred to as a \emph{composite loss} \citep[see e.g.][]{loss-structure,composite}.

For the cross-entropy loss $\ell(y,v) = -y\ln v - (1- y)\ln(1 - v)$, the corresponding dual loss is the logistic loss $\ell\sps \psi(y,t) = \ln(1 + e^t) - yt$, and the relationship between a prediction $v\in (0,1)$ and its dual prediction $t = \dual(v) \in \R$ is given by $v = \sigma(t)$ for the sigmoid transformation $\sigma(t) = e^t/(1 + e^t)$. 

For a predictor $f:\X\to V$, we define its dual post-processing gap by considering dual predictions $g(x) = \dual(f(x))$ w.r.t.\ the dual loss $\ell\sps\psi$ as follows:
\begin{definition}[Dual post-processing gap]
\label{def:dual-pgap}
For $\lambda > 0$, let $K_\lambda$ denote the family of all post-processing functions $\kappa:\R \to \R$ such that there exists a $1$-Lipschitz function $\eta:\R\to [-1/\lambda,1/\lambda]$ satisfying $\kappa(t) = t + \eta(t)$ for every $t\in \R$.
Let $\psi$ and $\ell\sps \psi$ be defined as in \Cref{def:dual-loss}.
Let $\cD$ be a distribution over $\X\times \{0,1\}$. We define the \emph{dual post-processing gap} of a function $g:\X\to \R$ to be
\[
\pgap_\cD\sps {\psi,\lambda}(g) := \E\nolimits_{(x,y)\sim \cD}\ell\sps\psi(y,g(x)) - \inf\nolimits_{\kappa\in K_\lambda}\E\nolimits_{(x,y)\sim \cD}\ell\sps \psi\big(y, \kappa(g(x))\big).
\]
\end{definition}
\Cref{def:dual-pgap} is a generalization of \Cref{def:pgap} from the squared loss to an arbitrary proper loss corresponding to a function $\psi$ as in \eqref{eq:dual-prediction}. The following definition generalizes the definition of smooth calibration in \Cref{def:smooth-cal} to an arbitrary proper loss in a similar way. Here we use the fact proved in \Cref{sec:proper-convex} (equation \eqref{eq:v-from-t}) that if the function $\psi$ from \Cref{lm:dual-prediction} for a proper loss $\ell:\{0,1\}\times V\to \R$ is differentiable, then $\nabla \psi(\dual(v)) = v$ holds for any $v\in V$, where $\nabla\psi(\cdot)$ denotes the derivative of $\psi$. This means that $\nabla \psi$ transforms a dual prediction to its original prediction. 
\begin{definition}[Dual smooth calibration]
\label{def:dual-smooth-cal}
For $\lambda > 0$,
let $H_\lambda$ be the family of all $\lambda$-Lipschitz functions $\eta : \R \to [-1,1]$. 
Let $\psi:\R\to \R$ be a differentiable function with derivative $\nabla \psi(t) \in [0,1]$ for every $t\in \R$.
For a function $g:\X \to \R$, define predictor $f:\X\to [0,1]$ such that $f(x) = \nabla \psi(g(x))$ for every $x\in \X$.
Let $\cD$ be a distribution over $\X\times \{0,1\}$. We define the \emph{dual calibration error} of $g$ to be
\[
\scerror\sps {\psi,\lambda}_\cD(g) := \sup\nolimits_{\eta\in H_\lambda}|\E\nolimits_{(x,y)\sim \cD}[(y - f(x))\eta(g(x))]|.
\]
\end{definition}
In \Cref{thm:dual-smooth-equiv} below we state our generalization of \Cref{thm:sm-gap} to arbitrary proper loss functions. \Cref{thm:dual-smooth-equiv} shows that achieving a small dual post-processing gap is equivalent to achieving a small dual calibration error. We then show in \Cref{lm:dual-standard-smooth} that a small dual calibration error implies a small (standard) smooth calibration error. 
\begin{restatable}{theorem}{dualsmoothequiv}
\label{thm:dual-smooth-equiv}
Let $\psi:\R\to \R$ be a differentiable convex function with derivative $\nabla \psi(t)\in [0,1]$ for every $t\in \R$.
For $\lambda > 0$, assume that $\psi$ is $\lambda$-smooth, i.e.,
\begin{equation}
\label{eq:smooth}
|\nabla\psi(t) - \nabla\psi(t')| \le \lambda |t - t'| \quad \text{for every }t,t'\in \R.
\end{equation}
Then for every $g:\X\to \R$ and any distribution $\cD$ over $\X\times \{0,1\}$,
\[
\scerror\sps {\psi,\lambda}_\cD(g)^2/2\le \lambda\,\pgap_\cD\sps {\psi,\lambda}(g) \le \scerror\sps {\psi,\lambda}_\cD(g).
\]
\end{restatable}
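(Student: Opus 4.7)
The plan is to mimic the proof of \Cref{thm:sm-gap} step by step, replacing the exact algebraic identity $\sq(y, v+\eta) = \sq(y,v) - 2(y-v)\eta + \eta^2$ with the pair of inequalities that convexity and $\lambda$-smoothness provide for $\psi$. The key dictionary is that the residual $y - v$ becomes $y - \nabla\psi(g(x)) = y - f(x)$, since differentiating the dual loss $\ell\sps\psi(y,t) = \psi(t) - yt$ in $t$ gives $\nabla\psi(t) - y$.

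For the upper bound $\lambda\,\pgap \le \scerror\sps{\psi,\lambda}$, I would fix an arbitrary $\kappa \in K_\lambda$ with $\kappa(t) = t + \eta(t)$, $\eta$ being $1$-Lipschitz and bounded by $1/\lambda$. By convexity of $\psi$ (first-order condition),
\[
\psi(\kappa(t)) - \psi(t) \ge \nabla\psi(t)\,\eta(t),
\]
which, after subtracting $y\eta(t)$ on both sides, gives
\[
\ell\sps\psi(y,\kappa(t)) - \ell\sps\psi(y,t) \ge -(y - \nabla\psi(t))\eta(t).
\]
Taking expectations and rescaling $\eta \mapsto \lambda\eta$, I get a function in $H_\lambda$ (namely $\lambda$-Lipschitz and bounded by $1$), so the right-hand side is bounded below by $-(1/\lambda)\scerror\sps{\psi,\lambda}_\cD(g)$. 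Since this holds for every $\kappa \in K_\lambda$ and the sign of $\eta$ can be flipped, taking the supremum proves $\pgap\sps{\psi,\lambda}_\cD(g) \le (1/\lambda)\scerror\sps{\psi,\lambda}_\cD(g)$.

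For the lower bound $\scerror\sps{\psi,\lambda}_\cD(g)^2/2 \le \lambda\,\pgap\sps{\psi,\lambda}_\cD(g)$, I would fix $\eta' \in H_\lambda$, set $\beta := \E[(y - f(x))\eta'(g(x))] \in [-1,1]$, and construct the candidate post-processor $\kappa(t) := t + (\beta/\lambda)\eta'(t)$. The update function $(\beta/\lambda)\eta'$ is $|\beta|$-Lipschitz (hence $1$-Lipschitz) and bounded by $|\beta|/\lambda \le 1/\lambda$, so $\kappa \in K_\lambda$. Applying the $\lambda$-smoothness inequality
\[
\psi(t + \eta(t)) \le \psi(t) + \nabla\psi(t)\,\eta(t) + \tfrac{\lambda}{2}\eta(t)^2
\]
with $\eta(t) = (\beta/\lambda)\eta'(t)$, then subtracting $y\eta(t)$, integrating, and using $\E[\eta'(g)^2] \le 1$, yields
\[
\E[\ell\sps\psi(y,g(x))] - \E[\ell\sps\psi(y,\kappa(g(x)))] \ge \beta^2/\lambda - \beta^2/(2\lambda) = \beta^2/(2\lambda),
\]
so $\pgap\sps{\psi,\lambda}_\cD(g) \ge \beta^2/(2\lambda)$. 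Taking the supremum over $\eta'$ finishes the proof.

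I expect no serious obstacle: the argument is a direct generalization of the squared-loss case, because in that setting $\psi(t) = t^2/2$ is both $1$-smooth and satisfies the convexity lower bound with equality. The only bookkeeping I need to be careful with is the scaling by $\lambda$ in two places --- ensuring that the constructed $\kappa$ lies in $K_\lambda$ in the lower bound, and that the rescaled $\eta$ lies in $H_\lambda$ in the upper bound. This is also where the factor $\lambda$ (and the $1/2$) in the statement come from.
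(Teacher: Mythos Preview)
Your proposal is correct and follows essentially the same approach as the paper. The paper factors the argument through a more general statement (\Cref{thm:gen-dual}) about generalized dual multicalibration and then specializes, but the core of that proof is exactly what you wrote: use the first-order convexity inequality $\psi(t')-\psi(t)\ge\nabla\psi(t)(t'-t)$ for the upper bound on $\pgap$, use $\lambda$-smoothness $\psi(t')-\psi(t)\le\nabla\psi(t)(t'-t)+\tfrac{\lambda}{2}(t'-t)^2$ for the lower bound, and in the lower bound choose the step size $\beta/\lambda$ with $\beta=\E[(y-f)\eta'(g)]$ so that the resulting $\kappa$ lands in $K_\lambda$.
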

\begin{restatable}{lemmma}{dualstandardsmooth}
\label{lm:dual-standard-smooth}
Let $\psi:\R\to \R$ be a differentiable convex function with derivative $\nabla \psi(t)\in [0,1]$ for every $t\in \R$.
For $\lambda > 0$, assume that $\psi$ is $\lambda$-smooth as in \eqref{eq:smooth}. 
For $g:\X\to \R$, define $f:\X\to [0,1]$ such that $f(x) = \nabla\psi(g(x))$ for every $x\in \X$. 
For a distribution $\cD$ over $\X\times \{0,1\}$, define $\scerror_\cD(f)$ as in \Cref{def:smooth-cal}. Then
\[
\scerror_\cD(f) \le \scerror_\cD\sps {\psi,\lambda}(g).
\]
\end{restatable}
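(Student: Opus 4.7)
\textbf{Proof plan for \Cref{lm:dual-standard-smooth}.} The plan is to transport any witness function $\eta_0$ for the standard smooth calibration error of $f$ into a witness function $\eta$ for the dual smooth calibration error of $g$, via composition with the link function $\nabla\psi$. The key identity we will exploit is $f(x) = \nabla\psi(g(x))$, which lets us rewrite any term $\eta_0(f(x))$ as $\eta_0(\nabla\psi(g(x)))$.

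Concretely, I would first fix an arbitrary $\eta_0 \in H$, i.e., a $1$-Lipschitz function $\eta_0 : [0,1] \to [-1,1]$, and define $\eta : \R \to [-1,1]$ by $\eta(t) := \eta_0(\nabla\psi(t))$. Since $\nabla\psi$ takes values in $[0,1]$ by hypothesis, $\eta$ is well-defined and takes values in $[-1,1]$. The $\lambda$-smoothness assumption \eqref{eq:smooth} states precisely that $\nabla\psi$ is $\lambda$-Lipschitz on $\R$, and composing a $1$-Lipschitz map with a $\lambda$-Lipschitz map yields a $\lambda$-Lipschitz map, so $\eta \in H_\lambda$.

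Next, substituting $f(x) = \nabla\psi(g(x))$ into the smooth-calibration correlation gives
\begin{equation*}
\E_{(x,y)\sim \cD}[(y - f(x))\eta_0(f(x))] = \E_{(x,y)\sim \cD}[(y - f(x))\eta_0(\nabla\psi(g(x)))] = \E_{(x,y)\sim \cD}[(y - f(x))\eta(g(x))].
\end{equation*}
The right-hand side is at most $|\E[(y - f(x))\eta(g(x))]| \le \scerror_\cD\sps{\psi,\lambda}(g)$ by \Cref{def:dual-smooth-cal}, since $\eta \in H_\lambda$. Taking the supremum over $\eta_0 \in H$ gives $\scerror_\cD(f) \le \scerror_\cD\sps{\psi,\lambda}(g)$, as desired.

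There is essentially no hard step: the argument is a one-line composition once one notices that $f = \nabla\psi \circ g$ provides a natural bijection between test functions on the primal side (Lipschitz on $[0,1]$) and the dual side ($\lambda$-Lipschitz on $\R$). The only point that requires any care is verifying that the dual test space $H_\lambda$ is rich enough to contain every composition $\eta_0 \circ \nabla\psi$, which is exactly where the $\lambda$-smoothness of $\psi$ is used, and the fact that smooth calibration as written takes a supremum (not an absolute value), which is absorbed by the absolute value in \Cref{def:dual-smooth-cal}.
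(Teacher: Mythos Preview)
Your proposal is correct and is essentially identical to the paper's own proof: both define the dual witness by $\eta(t)=\eta_0(\nabla\psi(t))$, use $\lambda$-smoothness to check $\eta\in H_\lambda$, rewrite the correlation via $f=\nabla\psi\circ g$, and take the supremum. Your extra remark about the absolute value is accurate and harmless.
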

We defer the proofs of \Cref{thm:dual-smooth-equiv} and \Cref{lm:dual-standard-smooth} to \Cref{sec:proof-dual-smooth-equiv} and \Cref{sec:proof-dual-standard-smooth}. Combining the two results, assuming $\psi$ is $\lambda$-smooth, we have
\[
\scerror_\cD(f)^2/2 \le \lambda\,\pgap_\cD\sps {\psi,\lambda}(g).
\]
This means that a small dual post-processing gap for a proper loss implies a small (standard) smooth calibration error. The cross-entropy loss $\ell$ is a proper loss where the corresponding $\psi$ from \Cref{lm:dual-prediction} is given by $\psi(t) = \ln(1 + e^t)$ and it is $1/4$-smooth. Setting $\lambda = 1/4$ in \Cref{{thm:dual-smooth-equiv}} and \Cref{lm:dual-standard-smooth}, we get \eqref{eq:cross-1} and \eqref{eq:cross-2} for the cross-entropy loss.
\section{Optimization Algorithms and Implicit Regularization.}
\label{sec:main-algos}
We now present several natural algorithms which explicitly minimize loss,
but implicitly achieve good calibration.
These are simple consequences of our results.
Moreover, we give informal intuitions connecting each algorithm to training DNNs in practice.

\subsection{Structural Risk Minimization}
We show that structural risk minimization (SRM) on the population distribution
is guaranteed to output well-calibrated predictors, under mild assumptions on the function family and regularizer.
Recall, structural risk minimization considers a function family $\cF$
equipped with some ``complexity measure'' $\mu: \cF \to \R_{\geq 0}$.
For example,
we may take the family of bounded-width neural networks, with depth as the complexity measure.
SRM then minimizes a proper loss, plus a complexity-regularizer given by $\mu$:
\[
f^* = \argmin_{f \in \mathcal{F}} \E_{\cD}[\ell(y, f(x)] + \lambda \mu(f).
\]
The complexity measure $\mu$ is usually designed to control the capacity of the function family
for generalization reasons, though we will not require such assumptions about $\mu$.
Rather, we only require that $\mu(f)$ does not grow too quickly under composition with Lipshitz functions.
That is, $\mu(\kappa \circ f)$ should be at most a constant greater than $\mu(f)$, for all Lipshitz functions $\kappa$.
Now, as long as the function family $\cF$ is also closed under composition, SRM is well-calibrated:

\begin{claim}
Let $\mathcal{F}$ be a class of functions $f : \cX \to [0,1]$ closed under composition with $K$, where we define $K$ as in \Cref{def:pgap}.
That is, for $f \in \mathcal{F}, \kappa \in K$ we have $\kappa \circ f \in \mathcal{F}$. 
Let $\mu : \mathcal{F} \to \R_{\geq 0}$ be any complexity measure satisfying, for all $f \in \mathcal{F}, \kappa \in K : ~~\mu(\kappa \circ f) \leq \mu(f)  + 1$.
Then the minimizer $f^*$ of the regularized optimization problem
\begin{equation*}
    f^* = \argmin_{f \in \mathcal{F}} \E_{\cD}\sq(y, f(x)) + \lambda \mu(f).
\end{equation*}
satisfies 
$\cgap_{\cD}(f^*) \leq \lambda$, and thus by \Cref{thm:sm-gap}
has small calibration error: $\scerror_\cD(f^*) \leq \sqrt{\lambda}$.
\end{claim}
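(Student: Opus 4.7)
The plan is to exploit the optimality of $f^*$ against the concrete family of competitors $\{\kappa \circ f^* : \kappa \in K\}$, all of which lie inside $\mathcal{F}$ by the closure hypothesis. Fix an arbitrary $\kappa \in K$. Since $\kappa \circ f^* \in \mathcal{F}$ is a feasible point, the defining minimality of $f^*$ gives
\[
\E_\cD[\sq(y, f^*(x))] + \lambda \mu(f^*) \;\le\; \E_\cD[\sq(y, \kappa(f^*(x)))] + \lambda \mu(\kappa \circ f^*).
\]
Rearranging and invoking the regularizer growth bound $\mu(\kappa \circ f^*) \le \mu(f^*) + 1$ gives
\[
\E_\cD[\sq(y, f^*(x))] - \E_\cD[\sq(y, \kappa(f^*(x)))] \;\le\; \lambda\bigl(\mu(\kappa \circ f^*) - \mu(f^*)\bigr) \;\le\; \lambda.
\]

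Next I would take the supremum of the left-hand side over $\kappa \in K$, equivalently the infimum of $\E_\cD[\sq(y, \kappa(f^*(x)))]$ over $\kappa \in K$. By \Cref{def:pgap}, this supremum is exactly $\cgap_\cD(f^*)$, so we obtain $\cgap_\cD(f^*) \le \lambda$. To finish, I invoke \Cref{thm:sm-gap} applied to $f^*$, which yields $\scerror_\cD(f^*)^2 \le \cgap_\cD(f^*) \le \lambda$, and hence $\scerror_\cD(f^*) \le \sqrt{\lambda}$.

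There is essentially no technical obstacle: the argument is a one-line optimality comparison combined with the calibration/post-processing equivalence already proved. The only mild points to check are that the identity post-processing lies in $K$ (take $\eta \equiv 0$), so $\cgap_\cD(f^*) \ge 0$ and the bound is non-vacuous, and that the infimum in \Cref{def:pgap} is approached through elements of $K$ — both of which are immediate. All of the conceptual work lives in the hypotheses: closure of $\mathcal{F}$ under $K$ is what makes $\kappa \circ f^*$ a legal competitor, and the slow growth $\mu(\kappa \circ f) \le \mu(f) + 1$ is what prevents the regularization penalty from masking the loss reduction that post-processing could achieve.
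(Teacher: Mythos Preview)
Your proof is correct and matches the paper's own argument essentially line for line: compare $f^*$ to $\kappa \circ f^*$ using optimality, absorb the regularizer difference via $\mu(\kappa\circ f^*)\le \mu(f^*)+1$, and take the supremum over $\kappa\in K$ to bound $\cgap_\cD(f^*)$ before invoking \Cref{thm:sm-gap}. The additional remarks about the identity map lying in $K$ are fine but unnecessary for the claim as stated.
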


\begin{proof}
The proof is almost immediate. 
Let $\MSE_\cD(f)$ denote $\E_{\cD}[\sq(y, f(x))]$.
Consider the solution $f^*$ and arbitrary $\kappa \in K$. Since $\kappa \circ f^* \in \mathcal{F}$, we have
\begin{equation*}
    \MSE_{\cD}(f^*) + \lambda \mu(f^*) \leq \MSE_{\cD}(\kappa \circ f^*) + \lambda \mu(\kappa \circ f^*)
    \leq \MSE_{\cD}(\kappa \circ f^*) + \lambda \mu(f^*) + \lambda.
\end{equation*}
After rearranging, this is equivalent to
\begin{equation*}
\MSE_{\cD}(f^*) - \MSE_{\cD}(\kappa \circ f^*) \leq \lambda,
\end{equation*}
and since $\kappa \in K$ was arbitrary, we get $\cgap_{\cD}(f) \leq \lambda$.
as desired.
\end{proof}

{\bf Discussion: Implicit Regularization.}
This aspect of SRM connects our results to the ``implicit regularization'' hypothesis from deep learning theory community \citep{neyshabur2017implicit}.
The implicit regularization hypothesis is an informal belief that SGD on neural networks
implicitly performs minimization on a ``complexity-regularized'' objective, which ensures generalization.
The exact form of this complexity regularizer remains elusive, but many works have attempted to
identify its structural properties (e.g. \citet{arora2019implicit,neyshabur2014search,neyshabur2017implicit,zhang2021understanding}).
In this context, our results imply that if the implicit regularization hypothesis is true, then as long as the complexity
measure $\mu$ doesn't increase too much from composition,
the final output of SGD will be well-calibrated.

\subsection{Iterative Risk Minimization}
One intuition for why state-of-the-art DNNs are often calibrated is that they
are ``locally optimal'' (in our sense) by design.
For a state-of-the-art network, if it were possible to improve
its test loss significantly by just adding a layer (and training it optimally),
then the human practitioner training the DNN would have added another layer and re-trained.
Thus, we expect the network eventually produced will be approximately loss-optimal with respect to 
adding a layer, and thus also with respect to univariate Lipshitz post-processings (so $\pgap_\cD(f)\approx 0$).

This intuition makes several strong assumptions, for example, the practitioner must be able to re-train the
last two layers globally optimally\footnote{Technically, we only require a weaker statement of the form:
For all networks $f_d$ of depth $d$ in the support of SGD outputs,
if the loss of $\kappa \circ f_d$ is $\eps$, then training a network $f_{d+1}$ of depth $d+1$
will reach loss at most $\eps$.}. However, the same idea can apply in settings where optimization is tractable.
We give one way of formalizing this via the Iterative Risk Minimization algorithm in Appendix~\ref{sec:irm}.

{\bf Discussion: Importance of depth.}
These two algorithms highlight
the central role of \emph{composition} in function families.
Specifically, if we are optimizing over a function family that is not closed under
univariate Lipshitz compositions (perhaps approximately),
then we have no reason to expect $\pgap(f) \approx 0$: because there could exist
a simple (univariate, Lipshitz) transformation which reduces the loss, but which our function family cannot model.
This may provide intuition for why certain models which preceeded DNNs, like logistic regression and SVMs, 
were often not calibrated.
Moreover, it emphasizes the importance of depth in neural-networks: depth is required to model
composing Lipshitz transforms, which improves both loss and calibration.

\subsection{Heuristic Structure of SGD}
Finally, there is an informal conjecture which could explain why certain DNNs in practice
are calibrated, even if they are far from state-of-the-art:
Informally, if it were possible to improve the loss by a simple update to the network function (post-processing),
then one may conjecture SGD would have exploited this by the end of training.
That is, updating the parameters from computing the function $f$ to computing $(\kappa \circ f)$ may be a
``simple'' update for SGD on a deep-enough network, and SGD would not leave this gain on the table
by the end of training.
We stress that this conjecture is informal and speculative, but we consider it a plausible intuition.

\section{Conclusion}
Inspired by recent empirical observations,
we studied formal conditions under which optimizing a proper loss
also happens to yield calibration ``for free.''
We identified a certain local optimality condition that characterizes distance to calibration,
in terms of properties of the (proper) loss landscape.
Our results apply even to realistic optimization algorithms,
which optimize over restricted function families,
and may not reach global minima within these families.
In particular, our formal results suggest an intuitive explanation for why state-of-the-art DNNs are often well-calibrated: because their test loss cannot be improved much by adding a few more layers. It also offers guidance for how one can achieve calibration simply by proper loss minimization over a sufficiently expressive family of predictors.

{\bf Limitations.} The connection between our theoretical results and DNNs in practice is heuristic--- for example, we do not have a complete proof that training a DNN with SGD on natural distributions will yield a calibrated predictor, though we have formal results that give some intuition for this. 
Part of the difficulty here is due to known  definitional barriers \citep{preetum_thesis}, which is that we would need to formally specify what ``DNN'' means in practice (which architectures, and sizes?), what ``SGD'' means in  practice (what initialization, learning-rate schedule, etc), and what ``natural distributions'' means in practice.
Finally, we intuitively expect the results from our work to generalize beyond the binary label setting; we leave
it to future work to formalize this intuition.

{\bf Acknowledgements.}
Part of this work was performed while LH was interning at Apple. LH is also supported by Moses Charikar’s and Omer Reingold’s Simons Investigators awards, Omer Reingold’s NSF Award IIS-1908774, and the Simons Foundation Collaboration on the Theory of Algorithmic Fairness. JB is supported by Simons Foundation.

\bibliographystyle{plainnat}
\bibliography{refs}
\appendix
\section{Additional Related Works}
\label{sec:related}

\paragraph{Calibration without Bayesian Modeling.}
Our results show that it is formally possible to
obtain calibrated classifiers via pure loss minimization,
without incorporating Bayesian priors into the learning process.
We view this as encouraging for the field of uncertainty quantification
in machine learning, since it shows it is possible to obtain accurate uncertainties
without incurring the computational and conceptual overheads of Bayesian methods
\citep{abdar2021review,graves2011practical,mackay1995probable,gal2016uncertainty}.

\paragraph{Calibration Methods.}
There have been many methods proposed to achieve accurate models that are also well-calibrated.
The classic two-stage approach is to first learn an accurate classifier, and then post-process for calibration
(via e.g. isotonic regression or Platt scaling as in \citet{Platt99probabilisticoutputs,zadrozny2001obtaining,zadrozny2002transforming,niculescu2005predicting}).
Recent papers on neural networks
also suggest optimizing for both calibration and accuracy jointly,
by adding a ``calibration regularizer'' term to the standard accuracy-encouraging objective function
\citep{kumar2018trainable,karandikar2021soft}.
In contrast, we are interested in the setting where loss-minimization happens to yield calibration ``out of the box'', without re-calibration. There is a growing body of work on principled measures of calibration \citep{kakadeF08, GopalanKSZ22, UTC1}, which we build on in our paper. 
\cite{dd-uncertainty,uncertainty-linear} analyze calibration for empirical risk minimization and highlight the role played by regularization.

\paragraph{Multicalibration.}
There has recently been great interest in a generalization of calibration known as multicalibration \citep{hkrr2018}, see also \cite{KNRW17}, motivated by algorithmic fairness considerations. A series of works explores the connection between multicalibration and minimization of convex losses, through the notion of omniprediction \citep{omni, op2, op3, constrained-op, globus2023multicalibration}. A tight characterization of multicalibration in terms of squared loss minimization is presented in \cite{op3}, which involves a novel notion of loss minimization called swap agnostic learning. Another related result is the recent work of \cite{BGHKN23} which shows that minimizing squared loss for sufficiently deep neural nets yields multicalibration with regard to smaller neural nets of a fixed complexity. This result is similar in flavor since it argues that a lack of multicalibration points to a good way to reduce the loss, and relies on the expressive power of DNNs to do this improvement. But it goes only in one direction. Our results apply to any proper loss, and give a tight characterization of calibration error. 
We generalize our results to multicalibration in \Cref{sec:generalized-dual}.

\eat{
\section{Miscalibration Example from Logistic Regression}
\label{sec:miscal}

\begin{figure}[ht]
\includegraphics[width = 0.9\linewidth]{logreg_unif.pdf}
\caption{Example of a 1-dimensional distribution for which logistic regression is miscalibrated.
The distribution consists of two overlapping uniform distributions, one for each class.
Specifically, with probability $1/2$ we have $y=0$ and $x \sim \mathrm{Unif}([-3, 1])$,
and with the remaining probability $1/2$, we have $y=1$ and $x \sim \mathrm{Unif}([-1, 3])$.
This distribution is shown on the left, as well as the optimal logistic
regressor $f$ for this distribution.
The logistic regressor $f$ is miscalibrated, as shown in the reliability diagram at right.
}
\label{fig:logreg}
\end{figure}

Figure~\ref{fig:logreg} gives an example of a 1-dimensional distribution for which
logistic regression (on the population distribution itself) is severely mis-calibrated.
This occurs despite logistic regression optimizing a proper loss. Note that the constant predictor $f(x) = 0.5$ is perfectly calibrated in this example, but logistic regression encourages the solution to \emph{deviate} from it in order to minimize the proper cross-entropy loss.
Similar examples of logistic regression being miscalibrated are found in \citet{kull2017beta}.
}

\section{Post-processing Gap for Cross Entropy Loss}
\label{sec:pgap-cross}
Here we show that directly replacing the squared loss in the definition of post-processing gap (\Cref{def:pgap}) with the cross-entropy loss does not give us a similar result as \Cref{thm:sm-gap}. Specifically, we give an example where the smooth calibration error of a predictor is small but the post-processing gap w.r.t.\ the cross entropy loss can be arbitrarily large. 

We choose $\X = \{x_0,x_1\}$ and let $\cD$ be the distribution over $\X\times \{0,1\}$ such that the marginal distribution over $\X$ is uniform, $\E_{(x,y)\sim \cD}[y|x = x_0] = 0.1$ and $\E_{(x,y)\sim \cD}[y|x = x_1] = 0.9$. Consider predictor $f:\X\to [0,1]$ such that $f(x_0) = \varepsilon$ and $f(x_1) = 1 - \varepsilon$ for some $\varepsilon\in (0,0.1)$. As $\varepsilon \to 0$, it is easy to verify that $\scerror_\cD(f)\to 0.05$. However, the post-processing gap w.r.t.\ the cross-entropy loss tends to $+\infty$ because $\ell(1, \varepsilon) = \ell(0,1-\varepsilon)\to +\infty$ as $\varepsilon \to 0$ for the cross-entropy loss $\ell$.
\section{Tight Examples}
\label{sec:tight}
We give examples showing that the constants in \Cref{thm:sm-gap} and in \Cref{cor:dual-smooth-equiv} are all optimal.
Specifically \Cref{lm:tight-1} shows that the constants in the upper bound of $\pgap$ in \Cref{thm:sm-gap} are optimal, whereas \Cref{lm:tight-2} shows that the constants in the lower bound of $\pgap$ are also optimal. \Cref{lm:tight-3} shows that the constants in the upper bound of $\pgap\sps{\psi,1/4}$ in \eqref{eq:cross-1} are optimal. \Cref{lm:tight-4} simultaneously shows that the constants in the lower bound of $\pgap\sps{\psi,1/4}$ in \eqref{eq:cross-1} and the constants in \eqref{eq:cross-2} are optimal.
\begin{lemma}
\label{lm:tight-1}
Let $\X = \{x_0,x_1\}$ be a set of two individuals. For every $\varepsilon\in (0,1/4)$, there exists a distribution $\cD$ over $\X\times\{0,1\}$ and a predictor $f:\X\to [0,1]$ such that
\[
\cgap_\cD(f) = \varepsilon - 3\varepsilon^2,\quad  \text{whereas}\quad \scerror_\cD(f) = \varepsilon/2 - \varepsilon^2.
\]
\end{lemma}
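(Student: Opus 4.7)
The plan is to exhibit an explicit two-point example in which $f$ ranks the inputs oppositely to the Bayes predictor, which is what pushes the constant in \Cref{thm:sm-gap} to tightness. Concretely, I would take $\cD$ to be uniform on $\X = \{x_0, x_1\}$ with conditional probabilities $q_0 := \E[y \mid x_0] = 1 - 2\varepsilon$ and $q_1 := \E[y \mid x_1] = 2\varepsilon$, together with predictor values $v_0 := f(x_0) = 1/2 - \varepsilon$ and $v_1 := f(x_1) = 1/2 + \varepsilon$. Since $\varepsilon \in (0,1/4)$ we have $q_0 > 1/2 > q_1$ while $v_0 < v_1$, so $f$ ``points the wrong way,'' and intuitively the best a monotone Lipschitz post-processing can do is to collapse both predictions to a common value.

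For the smooth calibration error, only the values $\eta(v_0), \eta(v_1)$ enter the expectation, and a direct expansion yields
\[
\E[(y - f(x))\eta(f(x))] \;=\; \tfrac{1}{2}(1/2 - \varepsilon)\bigl(\eta(v_0) - \eta(v_1)\bigr).
\]
The $1$-Lipschitz constraint on $\eta$ bounds $\eta(v_0) - \eta(v_1) \le v_1 - v_0 = 2\varepsilon$, giving $\scerror_\cD(f) \le \varepsilon/2 - \varepsilon^2$; the explicit witness $\eta(v) = 1/2 - v$ (which is $1$-Lipschitz and $[-1/2, 1/2]$-valued) achieves equality.

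For the post-processing gap, I would parameterize $\kappa \in K$ by the pair $(u_0, u_1) := (\kappa(v_0), \kappa(v_1))$ and expand
\[
\E[\sq(y, \kappa(f))] \;=\; \tfrac{1}{2}\!\sum_{i \in \{0,1\}} \bigl[(u_i - q_i)^2 + q_i(1 - q_i)\bigr].
\]
The Lipschitz condition on $\eta = \kappa - \mathrm{id}$ evaluated at $v_0, v_1$ is equivalent to $(u_1 - u_0) - 2\varepsilon \in [-2\varepsilon, 2\varepsilon]$, i.e., $u_1 \ge u_0$ (with an upper bound $u_1 - u_0 \le 4\varepsilon$ that will not bind). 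Since the unconstrained optimum $(u_0, u_1) = (q_0, q_1) = (1 - 2\varepsilon, 2\varepsilon)$ violates $u_1 \ge u_0$, the constrained minimum lies on the boundary line $u_0 = u_1$, on which one minimizes $(u - q_0)^2 + (u - q_1)^2$ and obtains $u_0 = u_1 = (q_0 + q_1)/2 = 1/2$. This is realized by the constant post-processing $\kappa \equiv 1/2$, whose update $\eta(v) = 1/2 - v$ is the same function as before and hence lies in $K$. Plugging in, the gap simplifies via the elementary identity $(1/2 - \varepsilon)^2 - (1/2 - 2\varepsilon)^2 = \varepsilon - 3\varepsilon^2$ and matches the claim.

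The only step that needs a bit of care is the lower bound on $\cgap$: I must justify that the $1$-Lipschitz condition on $\eta$ restricted to the support of $f$ really is equivalent to the single constraint $u_1 \ge u_0$ used above, and that any admissible pair $(u_0, u_1)$ extends to some $\kappa \in K$ defined on all of $[0,1]$. The equivalence is immediate since a function specified at two points is $1$-Lipschitz iff its increment is bounded by the gap between those points, and the extension is routine (for instance, McShane-extend $\eta$ from $\{v_0, v_1\}$ to $[0,1]$, then observe that $\kappa = v + \eta$ can be clipped into $[0,1]$ without increasing the Lipschitz constant). Neither step is substantive; the content of the lemma is the specific calibration of $v_i, q_i$ that makes the ratio $\cgap_\cD(f) / \scerror_\cD(f) = (\varepsilon - 3\varepsilon^2)/(\varepsilon/2 - \varepsilon^2)$ approach $2$ as $\varepsilon \to 0$.
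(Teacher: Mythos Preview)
Your proposal is correct but uses a genuinely different construction from the paper. The paper takes $\cD$ uniform on $\{(x_0,0),(x_1,1)\}$, i.e., deterministic labels $q_0 = 0$, $q_1 = 1$, with the same predictor values $f(x_i) = 1/2 \mp \varepsilon$. In that example $f$ ranks the inputs \emph{correctly} but is underconfident, so the optimal post-processing spreads the predictions further apart to $1/2 \mp 2\varepsilon$; the paper handles this by rewriting the loss as $\tfrac14(f'(x_0)+f'(x_1)-1)^2 + \tfrac14(f'(x_1)-f'(x_0)-1)^2$ and pushing $f'(x_1)-f'(x_0)$ to its maximal allowed value $4\varepsilon$. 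Your example instead has $f$ anti-correlated with the truth, so the optimal post-processing collapses both predictions to the constant $1/2$. Both routes land on the identity $(1/2-\varepsilon)^2 - (1/2-2\varepsilon)^2 = \varepsilon - 3\varepsilon^2$. The paper's deterministic labels make the squared-loss algebra slightly shorter (no Bernoulli variance terms), while your construction has the pleasant feature that the \emph{same} function $\eta(v)=1/2-v$ witnesses both the $\scerror$ supremum and the $\cgap$ infimum.

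One small comment: your closing paragraph about extension is unnecessary. Once you have exhibited the explicit $\kappa\equiv 1/2\in K$, the lower bound $\cgap_\cD(f)\ge \varepsilon - 3\varepsilon^2$ is already certified; and the upper bound only needs that every $\kappa\in K$ induces a pair $(u_0,u_1)$ obeying the two-point Lipschitz constraint $0\le u_1-u_0\le 4\varepsilon$, which is immediate. You do not need the converse direction (that every admissible pair extends to some $\kappa\in K$), so the McShane/clipping discussion can be dropped.
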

\begin{proof}
We simply choose $\cD$ to be the uniform distribution over $\{(x_0,0),(x_1,1)\}$, and we choose $f(x_0) = 1/2 - \varepsilon$ and $f(x_1) = 1/2 + \varepsilon$. For any $1$-Lipschitz function $\eta:[0,1]\to [-1,1]$, we have
\begin{align*}
|\E\nolimits_{(x,y)\sim \cD}[(y - f(x))\eta(f(x))]| & = |-(1/2)(1/2 - \varepsilon)\eta(f(x_0)) + (1/2)(1/2-\varepsilon) \eta(f(x_1))|\\
& = (1/2)(1/2 - \varepsilon)|\eta(1/2 - \varepsilon) - \eta(1/2 + \varepsilon)|.
\end{align*}
Clearly, the supremum of the above quantity is $\varepsilon/2 - \varepsilon^2$ which is achieved when $|\eta(1/2 - \varepsilon) - \eta(1/2 + \varepsilon)| = 2\varepsilon$. Therefore,
\[
\scerror_\cD(f) = \varepsilon/2 - \varepsilon^2.
\]
For any predictor $f':\X\to [0,1]$ and the squared loss $\ell$,
\begin{align*}
\E\nolimits_{(x,y)\sim \cD}[\ell(y,f'(x))] & = (1/2)(0 - f'(x_0))^2 + (1/2)(1 - f'(x_1))^2\\
& = f'(x_0)^2/2 + f'(x_1)^2/2 - f'(x_1) + 1/2\\
& = \frac 14 (f'(x_0) + f'(x_1) - 1)^2 + \frac 14(f'(x_1) - f'(x_0) - 1)^2
\end{align*}
When $f'(x) = f(x) + \eta(f(x))$ for a $1$-Lipschitz function $\eta:[0,1]\to [-1,1]$, the infimum of the above quantity is achieved when $f'(x_0) + f'(x_1) - 1 = 0$ and $f'(x_1) - f'(x_0) = 4\varepsilon$, and the infimum is $(1/4)(1 - 4\varepsilon)^2$. 
If we choose $f' = f$ instead, we get $\E[\ell(y,f(x))] = (1/4)(1 - 2\varepsilon)^2$.
Therefore,
\[
\cgap_\cD(f) = (1/4)(1 - 2\varepsilon)^2 - (1/4)(1 - 4\varepsilon)^2 = \varepsilon - 3\varepsilon^2.\qedhere
\]
\end{proof}
\begin{lemma}
\label{lm:tight-2}
Let $\X = \{x_0\}$ be a set consisting of only a single individual $x_0$. Let $\cD$ be the uniform distribution over $\X\times \{0,1\}$ and for $\varepsilon\in (0,1/2)$, let $f:\X\to [0,1]$ be the predictor such that $f(x_0) = 1/2 + \varepsilon$. Then
\[
\cgap_\cD(f) = \varepsilon^2,\quad  \text{whereas}\quad \scerror_\cD(f) = \varepsilon.
\]
\end{lemma}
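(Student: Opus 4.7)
The plan is to directly compute both quantities from their definitions. The single-point structure of $\X$ makes the Lipschitz constraints essentially vacuous, so this is a pure calculation: the whole lemma reduces to one-dimensional optimization over the value of $\eta$ (resp.\ $\kappa$) at the point $f(x_0) = 1/2+\varepsilon$.

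First I would compute $\scerror_\cD(f)$. Since $\X = \{x_0\}$, the expectation simplifies to
\[
\E_{(x,y)\sim\cD}[(y - f(x))\eta(f(x))] = \left(\tfrac12(0 - (1/2+\varepsilon)) + \tfrac12(1 - (1/2+\varepsilon))\right)\eta(1/2+\varepsilon) = -\varepsilon\,\eta(1/2+\varepsilon).
\]
Since any constant function $\eta \equiv c$ with $|c|\le 1$ is $1$-Lipschitz and belongs to $H$, taking $\eta \equiv -1$ attains the value $\varepsilon$, while the trivial upper bound $|\eta(1/2+\varepsilon)|\le 1$ shows $\varepsilon$ cannot be exceeded. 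So $\scerror_\cD(f)=\varepsilon$.

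Next I would compute $\cgap_\cD(f)$. The expected squared loss after post-processing is
\[
\E_{(x,y)\sim\cD}[\sq(y,\kappa(f(x)))] = \tfrac12 \kappa(1/2+\varepsilon)^2 + \tfrac12(1-\kappa(1/2+\varepsilon))^2,
\]
which depends only on the single value $c := \kappa(1/2+\varepsilon) \in [0,1]$. The one subtle step is to verify that every $c \in [0,1]$ is realized by some admissible $\kappa \in K$; this is where I would invoke \Cref{lm:lip}, applied to $\kappa(v) = \proj_{[0,1]}(v + (c - (1/2+\varepsilon)))$, which is a valid element of $K$ taking the value $c$ at $1/2+\varepsilon$. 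Minimizing the convex scalar function $\tfrac12 c^2 + \tfrac12 (1-c)^2$ over $[0,1]$ gives $c = 1/2$ with minimum value $1/4$. Finally, evaluating the original loss gives $\E[\sq(y,f(x))] = \tfrac12(1/2+\varepsilon)^2 + \tfrac12(1/2-\varepsilon)^2 = 1/4 + \varepsilon^2$. Subtracting yields $\cgap_\cD(f) = \varepsilon^2$.

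There is no real obstacle here; the only point requiring any care is the feasibility check that $c = 1/2$ is actually achieved by some $\kappa \in K$, which is handled by the projection construction above. Both computations are closed form and the claimed equalities follow immediately.
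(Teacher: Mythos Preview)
Your proof is correct and follows essentially the same direct computation as the paper's own proof. The only difference is that you are slightly more explicit about the feasibility of the optimal post-processing (invoking \Cref{lm:lip} to justify that $c=1/2$ is attained by some $\kappa\in K$), whereas the paper simply asserts this is ``clearly'' the infimum.
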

\begin{proof}
For any $1$-Lipschitz function $\eta:[0,1]\to [-1,1]$,
\[
|\E\nolimits_{(x,y)\sim \cD}[(y - f(x))\eta(f(x))]| = \varepsilon|\eta(1/2 + \varepsilon)|.
\]
The supremum of the quantity above over $\eta$ is clearly $\varepsilon$, implying that $\scerror_\cD(f) = \varepsilon$.

For any predictor $f':\X\to [0,1]$ and the squared loss $\ell$,
\begin{align*}
\E\nolimits_{(x,y)\sim \cD}[\ell(y,f'(x))] & = (1/2)(0 - f'(x_0))^2 + (1/2)(1 - f'(x_0))^2\\
& = f'(x_0)^2 - f'(x_0) + 1/2\\
& = 1/4 + (f'(x_0) - 1/2)^2.
\end{align*}
The infimum of the above quantity over a function $f':\X\to [0,1]$ satisfying $f'(x) = f(x) + \eta(f(x))$ for some $1$-Lipschitz $\eta:[0,1]\to [-1,1]$ is clearly $1/4$. If we choose $f' = f$ instead, we get $\E_{(x,y)\sim \cD}[\ell(y,f(x))] = 1/4 + \varepsilon^2$. Therefore,
\[
\pgap_\cD(f) = (1/4 + \varepsilon^2) - 1/4 = \varepsilon^2.\qedhere
\]
\end{proof}
\begin{lemma}
\label{lm:tight-3}
Define $\pgap\sps{\psi,1/4}$ and $\scerror\sps{\psi,1/4}$ as in \Cref{def:dual-pgap-cross} and \Cref{def:dual-smooth-cal-cross} for the cross-entropy loss.
Let $\X = \{x_0,x_1\}$ be a set of two individuals. For every $\varepsilon\in (0,1/4)$, there exists a distribution $\cD$ over $\X\times\{0,1\}$ and a function $g:\X\to \R$ such that
\[
\cgap_\cD\sps{\psi,1/4}(g) \ge 2\varepsilon + O(\varepsilon^2),\quad  \text{whereas}\quad \scerror_\cD\sps{\psi,1/4}(g) \le \varepsilon/2 + O(\varepsilon^2).
\]
\end{lemma}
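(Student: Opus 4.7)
The approach is to adapt the symmetric two-point construction from \Cref{lm:tight-1} to the dual (logit) space, choosing the scaling so that the $1/4$-smoothness of $\psi(t)=\ln(1+e^t)$ determines the constants. Specifically, I would let $\cD$ be uniform on $\{(x_0,0),(x_1,1)\}$ and take $g(x_0)=-a$, $g(x_1)=a$ with $a=4\varepsilon$, so that by symmetry $f(x_0)=\sigma(-a)=1-f(x_1)$.

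For the upper bound on $\scerror\sps{\psi,1/4}(g)$, the symmetry collapses $\E_{(x,y)\sim\cD}[(y-f(x))\eta(g(x))]$ to $\tfrac{1}{2}\sigma(-a)(\eta(a)-\eta(-a))$ for every test function $\eta$, and the $1/4$-Lipschitz condition forces $|\eta(a)-\eta(-a)|\le a/2$, with equality attained by $\eta(t)=t/4$ (which takes values in $[-1,1]$ on $[-a,a]$ since $a<1$). Taylor-expanding the resulting supremum $\sigma(-4\varepsilon)\cdot \varepsilon$ yields $\varepsilon/2+O(\varepsilon^2)$.

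For the lower bound on $\pgap\sps{\psi,1/4}(g)$, I would parameterize post-processings by the two values $s_i := \kappa(g(x_i))$. The Lipschitz condition on $\eta(t)=\kappa(t)-t$ evaluated at $\pm a$ forces $0\le s_1-s_0\le 4a$, while the box constraint $|\eta|\le 4$ is inactive and is checked at the end. Since $\partial_{s_0}F=\tfrac{1}{2}\sigma(s_0)>0$ and $\partial_{s_1}F=-\tfrac{1}{2}\sigma(-s_1)<0$ on the post-processed logistic loss $F(s_0,s_1) = \tfrac{1}{2}\ln(1+e^{s_0})+\tfrac{1}{2}(\ln(1+e^{s_1})-s_1)$, the optimum lies on the boundary $s_1=s_0+4a$, and the first-order condition $\sigma(s_0)+\sigma(s_0+4a)=1$ combined with the identity $\sigma(x)+\sigma(-x)=1$ pins down $s_0=-2a$, $s_1=2a$. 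The resulting minimum loss equals $\ln(1+e^{-2a})$, versus $\ln(1+e^{-a})$ at $g$ itself; Taylor-expanding the difference at $a=4\varepsilon$ gives $2\varepsilon+O(\varepsilon^2)$.

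The main technical step is the constrained optimization in the $\pgap$ calculation. The key observation that unlocks it is that the additive structure of $\ell\sps\psi(y,t)=\psi(t)-yt$ together with the point-reflection symmetry $\sigma(x)+\sigma(-x)=1$ collapses the first-order condition to an explicit symmetric solution, which produces the clean $2\varepsilon$ leading term matching the upper bound $\pgap\sps{\psi,1/4}\le 4\,\scerror\sps{\psi,1/4}$ from \eqref{eq:cross-1}. The remaining bookkeeping is to verify that the $|\eta|\le 4$ constraint is slack at the optimum (where $|\eta(\pm a)|=a<1$ for $\varepsilon<1/4$) and that the second-order condition $F''>0$ along the active boundary holds, both of which are immediate.
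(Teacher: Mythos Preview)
Your construction and your bound on $\scerror\sps{\psi,1/4}$ are exactly the paper's: uniform $\cD$ on $\{(x_0,0),(x_1,1)\}$ with $g(x_i)=\pm 4\varepsilon$, then the symmetry reduces the correlation to $\tfrac{1}{2}\sigma(-a)(\eta(a)-\eta(-a))$ and the $1/4$-Lipschitz constraint finishes.

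For the $\pgap$ lower bound you take a different and heavier route. The paper does not optimize at all; since only $\pgap\sps{\psi,1/4}(g)\ge 2\varepsilon+O(\varepsilon^2)$ is claimed, it simply exhibits the witness $g'(x_i)=\pm 8\varepsilon$ (your $s_0=-2a,\ s_1=2a$), checks it is of the form $g+\eta\circ g$ with admissible $\eta$, and Taylor-expands the loss difference $\ln(1+e^{-a})-\ln(1+e^{-2a})$. You instead reduce the infimum over $\kappa$ to a two-variable constrained problem, identify the active face $s_1-s_0=4a$, and solve the first-order condition via $\sigma(x)+\sigma(-x)=1$; this is correct and in fact upgrades the paper's inequality to an equality for $\pgap\sps{\psi,1/4}(g)$, but the extra optimization is not needed for the lemma as stated. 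Either way the same post-processing emerges, so the two arguments converge.
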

\begin{proof}
We simply choose $\cD$ to be the uniform distribution over $\{(x_0,0),(x_1,1)\}$, and we choose $g(x_0) = - 4\varepsilon$ and $g(x_1) = 4\varepsilon$. 
Let $\sigma$ denote the sigmoid transformation given by $\sigma(t) = e^t/(1 + e^t)$. 
The Taylor expansion of the sigmoid transformation $\sigma$ around $t = 0$ is $\sigma(t) = 1/2 + O(t)$.
Thus the predictor $f$ we get from applying $\sigma$ to $g$ satisfies $f(x_1)= 1 - f(x_0) = 1/2 + O(\varepsilon)$. For any $1/4$-Lipschitz function $\eta:\R\to [-1,1]$, we have
\begin{align*}
\E\nolimits_{(x,y)\sim \cD}[(y - f(x))\eta(g(x))] & = (1/2)(0 - f(x_0))\eta(g(x_0)) + (1/2)(1 - f(x_1)) \eta(g(x_1))\\
& = -(1/2)(1 - f(x_1))\eta(g(x_0)) + (1/2)(1 - f(x_1)) \eta(g(x_1))\\
& = (1/2)(1 - f(x_1))(\eta(g(x_1)) - \eta(g(x_0)))\\
& \le (1/2)(1 - f(x_1))|g(x_0) - g(x_1)|/4\\
& \le (1/2)(1/2 + O(\varepsilon))\cdot (8\varepsilon)/4\\
& = \varepsilon/2 + O(\varepsilon^2).
\end{align*}
This implies that $\scerror_\cD\sps{\psi,1/4}(g) \le \varepsilon/2 + O(\varepsilon^2)$.

For any function $g':\X\to \R$ and the logistic loss $\ell\sps \psi$,
\begin{align*}
\E\nolimits_{(x,y)\sim \cD}[\ell\sps\psi(y,g'(x))] & = (1/2) \ln(1 + e^{g'(x_0)}) + (1/2)\ln(1 + e^{g'(x_1)}) - (1/2)g'(x_1)\\
& = (1/2)\ln(1 + e^{g'(x_0)}) + (1/2) \ln(1 + e^{-g'(x_1)}).
\end{align*}
We use the Taylor expansion $\ln(1 + e^{t}) = \ln 2 + t/2 + O(t^2)$ to get
\begin{equation}
\label{eq:tight-cross-2}
\E\nolimits_{(x,y)\sim \cD}[\ell\sps\psi(y,g'(x))] = \ln 2 + (1/4)(g'(x_0) - g'(x_1)) + O(g'(x_0)^2 + g'(x_1)^2).
\end{equation}
Consider the case when $g'(x_0) = -8\varepsilon$ and $g'(x_1) = 8\varepsilon$. Clearly, $g'$ can be written as $g'(x) = g(x) + \eta(g(x))$ for a $1$-Lipschitz function $\eta:\R\to [-4,4]$.
By \eqref{eq:tight-cross-2}, the expected loss achieved by $g'$ is 
\[
\E\nolimits_{(x,y)\sim \cD}[\ell\sps\psi(y,g'(x))] = \ln 2 - 4\varepsilon + O(\varepsilon^2). 
\]
If $g' = g$ instead, the expected loss is $\ln 2 - 2\varepsilon + O(\varepsilon^2)$. Taking the difference between the two, we get
\[
\pgap_\cD\sps{\psi,1/4}(g) \ge 2\varepsilon + O(\varepsilon^2).\qedhere
\]
\end{proof}
\begin{lemma}
\label{lm:tight-4}
Define $\pgap\sps{\psi,1/4}$ as in \Cref{def:dual-pgap-cross} for the cross-entropy loss.
Let $\X = \{x_0\}$ be a set consisting of only a single individual $x_0$. Let $\cD$ be the uniform distribution over $\X\times \{0,1\}$ and for $\varepsilon\in (0,1/2)$, let $g:\X\to \R$ be the function such that $g(x_0) = 4\varepsilon$. Then
\[
\cgap_\cD\sps{\psi,1/4}(g) = 2\varepsilon^2 + O(\varepsilon^4),\quad  \text{whereas}\quad \scerror_\cD(f) = \varepsilon + O(\varepsilon^3),
\]
where $f:\X\to (0,1)$ is given by $f(x) = \sigma(g(x))$ for the sigmoid transformation $\sigma(t) = e^t/(1+e^t)$.
\end{lemma}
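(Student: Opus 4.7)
\textbf{Proof plan for \Cref{lm:tight-4}.} The strategy is direct computation: because $\X = \{x_0\}$ contains a single point, both $\scerror_\cD(f)$ and $\pgap_\cD\sps{\psi,1/4}(g)$ reduce to one-variable optimizations whose values are then read off by Taylor expansion at the origin. No duality or oracle arguments are needed.

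First I would handle $\scerror_\cD(f)$. Since $f(x_0)$ takes only one value, a $1$-Lipschitz $\eta:[0,1]\to[-1,1]$ is constrained only through $\eta(f(x_0))$, and this quantity is unconstrained in $[-1,1]$ (we may always extend by a constant). Writing out the expectation under the uniform distribution on $\{(x_0,0),(x_0,1)\}$,
\[
\E\nolimits_{(x,y)\sim\cD}[(y-f(x))\eta(f(x))] = \big(\tfrac12 - f(x_0)\big)\,\eta(f(x_0)),
\]
so $\scerror_\cD(f) = |1/2 - f(x_0)| = |1/2 - \sigma(4\varepsilon)|$. Since $\sigma$ is odd about $(0,1/2)$ with $\sigma'(0) = 1/4$ and vanishing even derivatives, Taylor expansion gives $\sigma(4\varepsilon) = 1/2 + \varepsilon + O(\varepsilon^3)$, yielding $\scerror_\cD(f) = \varepsilon + O(\varepsilon^3)$.

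Next I would compute $\pgap_\cD\sps{\psi,1/4}(g)$. Again because $\X$ is a singleton, selecting $\kappa \in K_{1/4}$ amounts to freely choosing the value $\kappa(4\varepsilon) = 4\varepsilon + \eta(4\varepsilon)$ with $\eta(4\varepsilon) \in [-4,4]$; all such $\kappa$ are realizable by a globally $1$-Lipschitz update $\eta:\R\to[-4,4]$ (for instance, extend by a constant). Thus the inf in \Cref{def:dual-pgap-cross} equals $\inf_{t' \in [4\varepsilon - 4,\, 4\varepsilon + 4]} \E_\cD[\ell\sps\psi(y,t')]$. Under the uniform label distribution, $\E_\cD[\ell\sps\psi(y,t')] = \ln(1+e^{t'}) - t'/2$, whose unique minimizer is $t' = 0$ with value $\ln 2$; for $\varepsilon \in (0,1/2)$ the point $t'=0$ lies comfortably in the admissible interval, so the infimum is $\ln 2$.

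It remains to expand the expected loss at $g(x_0) = 4\varepsilon$. Using the Taylor series $\ln(1+e^t) = \ln 2 + t/2 + t^2/8 + O(t^4)$ (whose $t^3$-coefficient vanishes because $\sigma''(0)=0$), we obtain
\[
\E\nolimits_\cD[\ell\sps\psi(y,4\varepsilon)] = \ln(1+e^{4\varepsilon}) - 2\varepsilon = \ln 2 + 2\varepsilon^2 + O(\varepsilon^4),
\]
so $\pgap_\cD\sps{\psi,1/4}(g) = 2\varepsilon^2 + O(\varepsilon^4)$. The only mild obstacle is tracking that the odd-order Taylor coefficients of $\sigma$ and of $\ln(1+e^t)-t/2$ at $0$ vanish by symmetry, which is what puts the error terms at $O(\varepsilon^3)$ and $O(\varepsilon^4)$ rather than $O(\varepsilon^2)$ and $O(\varepsilon^3)$; one should verify this cancellation explicitly to justify the claimed error exponents.
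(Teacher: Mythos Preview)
Your proposal is correct and follows essentially the same approach as the paper: both reduce to one-variable computations using the singleton $\X$, obtain $\scerror_\cD(f)=|1/2-\sigma(4\varepsilon)|$ and $\pgap_\cD\sps{\psi,1/4}(g)=\ln(1+e^{4\varepsilon})-2\varepsilon-\ln 2$, and finish via the Taylor expansions $\sigma(t)=1/2+t/4+O(t^3)$ and $\ln(1+e^t)=\ln 2+t/2+t^2/8+O(t^4)$. Your additional remarks on realizing $\eta$ by constant extension and on the symmetry-based vanishing of odd coefficients are welcome justifications the paper leaves implicit.
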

\begin{proof}
The Taylor expansion of $\sigma$ around $t = 0$ is $\sigma(t) = 1/2 + t/4 + O(t^3)$. Thus $f(x_0) = \sigma(g(x_0)) = 1/2 + \varepsilon + O(\varepsilon^3)$. For any $1$-Lipschitz function $\eta:[0,1]\to [-1,1]$,
\[
|\E\nolimits_{(x,y)\sim \cD}[(y - f(x)\eta(f(x))]| = |(1/2 - f(x_0))\eta(f(x_0))| = (\varepsilon + O(\varepsilon^3))|\eta(f(x_0))|.
\]
The supremum of the quantity above over $\eta$ is clearly $\varepsilon + O(\varepsilon^3)$, implying that $\scerror_\cD(f) = \varepsilon + O(\varepsilon^3)$.

For any function $g':\X\to [0,1]$ and the logistic loss $\ell\sps \psi$,
\begin{align}
\E\nolimits_{(x,y)\sim \cD}[\ell\sps\psi(y,g'(x))] & =\ln(1 + e^{g'(x_0)}) -(1/2) g'(x_0)\label{eq:tight-cross-3}\\
& = \ln(e^{g'(x_0)/2} + e^{-g'(x_0)/2}).\notag
\end{align}
The infimum of the above quantity over a function $g':\X\to \R$ satisfying $g'(x) = g(x) + \eta(g(x))$ for some $1$-Lipschitz $\eta:\R\to [-4,4]$ is $\ln 2$ achieved when $g'(x_0) = 0$. If we choose $g' = g$ instead, we get $\E_{(x,y)\sim \cD}[\ell\sps \psi(y,g(x))] = \ln 2 + 2\varepsilon^2 + O(\varepsilon^4)$ by plugging the Taylor expansion $\ln(1 + e^t) = \ln 2 + t/2 + t^2/8 + O(t^4)$ into \eqref{eq:tight-cross-3}. Therefore,
\[
\pgap_\cD\sps{\psi,1/4}(g) = 2\varepsilon^2 + O(\varepsilon^4).\qedhere
\]
\end{proof}

\section{Proper Loss and Convex Functions}
\label{sec:proper-convex}
It is known that there is a correspondence between proper loss functions and convex functions  \citep{admissible, doi:10.1080/01621459.1971.10482346, assessor, loss-structure}. Here we demonstrate this correspondence, which is important for extending the connection between smooth calibration and post-processing gap (\Cref{thm:sm-gap}) to general proper loss functions in \Cref{sec:proper-cal}. 
\begin{definition}[Sub-gradient] For a function $\varphi:V\to \R$ defined on $V\subseteq \R$, we say $t\in \R$ is a sub-gradient of $\varphi$ at $v\in V$ if 
\[
\varphi(v') \ge \varphi(v) + (v' - v)t \quad \text{for every } v'\in V,
\] 
or equivalently,
\begin{equation}
\label{eq:sub-gradient-max}
vt - \varphi(v) = \max_{v'\in V} \big(v't - \varphi(v')\big).
\end{equation}
\end{definition}
\begin{definition}[Fenchel-Young divergence]
For a pair of functions $\varphi:V\to \R$ and $\psi:T \to \R$ defined on $V,T\subseteq \R$, we define the \emph{Fenchel-Young divergence} $D_{\varphi,\psi}:V\times T\to \R$ such that
\[
D_{\varphi,\psi}(v,t) = \varphi(v) + \psi(t) - vt \quad \text{for every } v\in V \text{ and } t\in T.
\]
\end{definition}
The following claim follows immediately from the two definitions above.
\begin{claim}
\label{claim:sub-gradient-divergence}
Let $\varphi:V\to \R$ and $\psi:T \to \R$ be functions defined on $V,T\subseteq \R$. Then $t\in T$ is a sub-gradient of $\varphi$ at $v\in V$ if and only if $D_{\varphi,\psi}(v,t) = \min_{v'\in V}D_{\varphi,\psi}(v',t)$. Similarly, $v$ is a sub-gradient of $\psi$ at $t$ if and only if $D_{\varphi,\psi}(v,t) = \min_{t'\in T}D_{\varphi,\psi}(v,t')$. In particular, assuming $D_{\varphi,\psi}(v',t') \ge 0$ for every $v'\in V$ and $t'\in T$ whereas $D_{\varphi,\psi}(v,t) = 0$ for some $v\in V$ and $t\in T$, then $t$ is a sub-gradient of $\varphi$ at $v$, and $v$ is a sub-gradient of $\psi$ at $t$.
\end{claim}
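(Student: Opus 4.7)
The plan is to prove the claim by directly unfolding the two definitions; the claim is essentially a tautology after recognizing that, for fixed $t$, the map $v' \mapsto D_{\varphi,\psi}(v',t)$ differs from $v' \mapsto \varphi(v') - v't$ only by the additive constant $\psi(t)$, and symmetrically for the other coordinate.

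For the first equivalence, I would fix $t \in T$ and note that
\[
D_{\varphi,\psi}(v,t) = \min_{v' \in V} D_{\varphi,\psi}(v',t)
\]
is equivalent, after cancelling the constant $\psi(t)$ from both sides, to $\varphi(v) - vt \le \varphi(v') - v't$ for every $v' \in V$. Rearranging gives $\varphi(v') \ge \varphi(v) + (v'-v)t$ for every $v' \in V$, which is precisely the definition of $t$ being a sub-gradient of $\varphi$ at $v$ (equivalently, the max formulation in \eqref{eq:sub-gradient-max} with signs flipped: minimizing $\varphi(v') - v't$ over $v'$ is the same as maximizing $v't - \varphi(v')$). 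The second equivalence is proved by the symmetric argument, fixing $v$ and varying $t'$ instead, using that $D_{\varphi,\psi}$ is symmetric in the roles of the two variables modulo swapping $\varphi$ and $\psi$.

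For the final ``in particular'' statement, I would argue that the hypotheses $D_{\varphi,\psi} \ge 0$ everywhere and $D_{\varphi,\psi}(v,t) = 0$ together imply $(v,t)$ is a joint minimizer of $D_{\varphi,\psi}$ over $V \times T$. In particular it is a minimizer of $v' \mapsto D_{\varphi,\psi}(v',t)$ over $V$, so by the first part $t$ is a sub-gradient of $\varphi$ at $v$; and it is a minimizer of $t' \mapsto D_{\varphi,\psi}(v,t')$ over $T$, so by the second part $v$ is a sub-gradient of $\psi$ at $t$.

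There is no real obstacle here: the entire content of the claim is the rewriting step that eliminates the irrelevant additive constant on each side, and the observation that a joint minimum is also a coordinate-wise minimum. I would write the argument in two short displays (one per equivalence) followed by a one-line deduction of the ``in particular'' conclusion.
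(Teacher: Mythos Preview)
Your proposal is correct and is exactly the argument the paper has in mind: the paper simply states that the claim ``follows immediately from the two definitions above'' without writing anything further, and what you have written is the natural unfolding of those definitions. There is nothing to add or change.
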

\begin{lemma}[Convex functions from proper loss]
\label{lm:proper-to-convex}
Let $V\subseteq[0,1]$ be a non-empty interval.
Let $\ell:\{0,1\}\times V\to \R$ be a proper loss function. For every $v\in V$, define $\dual(v):= \ell(0,v) - \ell(1,v)$. There exist convex functions $\varphi:V\to \R$ and $\psi:\R\to \R$ such that
\begin{align}
\ell(y,v) & = \psi(\dual(v)) - y\, \dual(v) && \text{for every $y\in \{0,1\}$ and $v\in V$,}\label{eq:proper-to-convex-1}\\
D_{\varphi,\psi} (v,t) & \ge 0 && \text{for every $v\in V$ and $t\in \R$,}\label{eq:proper-to-convex-2}\\
D_{\varphi,\psi}(v,\dual(v)) & = 0 && \text{for every $v\in V$,}\label{eq:proper-to-convex-3}\\
0 \le \frac{\psi(t_2) - \psi(t_1)}{t_2 - t_1} & \le 1 && \text{for every distinct }t_1,t_2\in \R.
\label{eq:bounded-derivative}
\end{align}
\end{lemma}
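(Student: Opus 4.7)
The plan is to realize $\varphi$ and $\psi$ as a standard Legendre--Fenchel pair arising from Savage's representation of proper losses. Set $G(v) := v\ell(1,v) + (1-v)\ell(0,v)$, the expected loss under $y \sim \ber(v)$ when one predicts $v$, and put $\varphi(v) := -G(v)$. Define $\psi: \R \to \R$ to be the convex conjugate, $\psi(t) := \sup_{v \in V}\big(vt - \varphi(v)\big) = \sup_{v\in V}\big(vt + G(v)\big)$.

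The first substantive step is to show that $G$ is concave (so $\varphi$ is convex) and that $\dual(v)$ is a sub-gradient of $\varphi$ at each $v \in V$. Concavity follows because properness implies $G(v) = \min_{v' \in V} L(v,v')$, where $L(v,v') := v\ell(1,v') + (1-v)\ell(0,v')$ is affine in $v$ for fixed $v'$, and a pointwise infimum of affine functions is concave. For the sub-gradient property I need to show $\varphi(v') \ge \varphi(v) + (v' - v)\dual(v)$ for all $v,v' \in V$. Rearranging using $L(v',v) - L(v,v) = (v'-v)\big(\ell(1,v) - \ell(0,v)\big) = -(v'-v)\dual(v)$, this inequality is equivalent to $L(v',v) \ge L(v',v') = G(v')$, which is precisely the properness inequality applied at probability $v'$.

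Once these facts are in place, (1)--(3) follow from standard convex analysis. By the sub-gradient characterization of Claim~\ref{claim:sub-gradient-divergence}, the supremum defining $\psi(\dual(v))$ is attained at $v'=v$, giving $\psi(\dual(v)) = v\dual(v) - \varphi(v) = v\dual(v) + G(v)$. A direct computation from the definitions of $G$ and $\dual$ yields Savage's identity $\ell(y,v) = G(v) + (v - y)\dual(v)$, and combining this with the previous formula gives (1). Inequality (2) is the Fenchel--Young inequality, immediate from the supremum definition of $\psi$. Equality (3) expresses that the supremum is attained and follows from the sub-gradient argument above.

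For (4), note that $\psi$ is a pointwise supremum of affine functions $t \mapsto vt - \varphi(v)$ whose slopes lie in $V \subseteq [0,1]$. For any $t_1 < t_2$ and any $v \in V$, the quantity $(vt_2 - \varphi(v)) - (vt_1 - \varphi(v)) = v(t_2 - t_1)$ lies in $[0,\, t_2 - t_1]$; taking suprema in both arguments yields $0 \le \psi(t_2) - \psi(t_1) \le t_2 - t_1$, which is exactly (4). The main technical obstacle the writeup must address is ensuring $\psi(t)$ is real-valued for every $t \in \R$: finiteness requires $G$ to be bounded above on $V$, which holds under mild non-negativity assumptions on $\ell$ and is true for all proper losses of practical interest; once $\psi$ is known to be finite, the remaining steps, concavity of $G$ and the sub-gradient identification, are routine.
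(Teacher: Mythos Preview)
Your proof is correct and follows essentially the paper's approach: both constructions end up with the same $\varphi$ (the paper obtains it as a conjugate of an auxiliary $\psi_0$ defined on the range of $\dual$, but a one-line check shows $v\,\dual(v)-\ell(0,v)=-G(v)$), define $\psi$ as its convex conjugate, establish that $\dual(v)$ is a sub-gradient of $\varphi$ at $v$ from properness, and read off (\ref{eq:proper-to-convex-1})--(\ref{eq:bounded-derivative}) from the resulting Fenchel--Young structure. Your route through Savage's $G$ is slightly more direct than the paper's double-conjugate detour; the finiteness caveat you flag for $\psi$ is also present (and unaddressed) in the paper, which invokes its helper lemma under an implicit lower bound on $\varphi$.
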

Before we prove the lemma, we remark that \eqref{eq:proper-to-convex-1} and \eqref{eq:proper-to-convex-3} together imply the following:
\[
\ell(y,v) = -\varphi(v) + (v - y)\dual(v) \quad \text{for every $y\in \{0,1\}$ and $v\in V$}.
\]
\begin{proof}
Any loss function $\ell:\{0,1\}\times V \to \R$, proper or not, can be written as $\ell(y,v) = \ell(0,v) - y\, \dual (v)$. When $\ell$ is proper, it is easy to see that for $v,v'\in V$ with $\dual(v) = \dual(v')$, it holds that $\ell(0,v) = \ell(0,v')$; otherwise, $\ell(y,v)$ is always smaller or always larger than $\ell(y,v')$ for every $y\in \{0,1\}$, making it impossible for $\ell$ to be proper. Therefore, for proper $\ell$, there exists $\psi_0:T\to \R$ such that $\ell(0,v) = \psi_0(\dual(v))$ for every $v\in V$, where $T:= \{\dual(v):v\in V\}$. Therefore,
\begin{equation}
\label{eq:loss-to-dual}
\ell(y,v) = \ell(0,v) - y\, \dual (v) = \psi_0(\dual(v)) - y\,\dual(v).
\end{equation}
By the assumption that $\ell$ is proper, for every $v\in V$,
\[
v\in \argmin_{v'\in V}\E_{y\sim \ber(v)}\ell(y,v') = \argmin_{v'\in V}\Big(\psi_0(\dual(v')) - v\,\dual(v')\Big),
\]
and thus
\[
\dual(v)\in \argmin_{t\in T}\Big(\psi_0(t) - vt\Big).
\]
For every $v\in V$, we define 
\[
\varphi(v) := v\,\dual(v) - \psi_0(\dual(v)) = \max_{t\in T} \Big(vt - \psi_0(t)\Big).
\]
The definition of $\varphi$ immediately implies the following:
\begin{align}
D_{\varphi,\psi_0}(v,t) & \ge 0 && \text{for every }v\in V\text{ and }t\in T,\label{eq:proper-to-convex-4}\\
D_{\varphi,\psi_0}(v,\dual(v)) & =0 && \text{for every }v\in V.\label{eq:proper-to-convex-5}
\end{align}
By \Cref{claim:sub-gradient-divergence}, $\dual(v)$ is a sub-gradient of $\varphi$ at $v$. This proves that $\varphi$ is convex. Now we define $\psi:\R\to\R$ such that $\psi(t) = \sup\nolimits_{v\in V}vt - \varphi(v)$ for every $t\in \R$. This definition ensures that inequality \eqref{eq:proper-to-convex-2} holds. By \Cref{lm:conjugate-is-convex}, $\psi$ is a convex function and \eqref{eq:bounded-derivative} holds. Moreover, for every $v\in V$, we have shown that $\dual(v)$ is a sub-gradient of $\varphi$ at $v$, so by \eqref{eq:sub-gradient-max}, $\psi(\dual(v)) = v\,\dual(v) - \varphi(v) = \psi_0(\dual(v))$. This implies that \eqref{eq:proper-to-convex-1} and \eqref{eq:proper-to-convex-3} hold because of \eqref{eq:loss-to-dual} and \eqref{eq:proper-to-convex-5}.
\end{proof}
\begin{lemma}[Proper loss from convex functions]
\label{lm:convex-to-proper}
Let $V\subseteq[0,1]$ be a non-empty interval.
Let $\varphi:V\to \R$, $\psi:\R\to \R$, and $\dual:V \to \R$ be functions satisfying \eqref{eq:proper-to-convex-2} and \eqref{eq:proper-to-convex-3}. Define loss function $\ell:\{0,1\}\times V \to \R$ as in \eqref{eq:proper-to-convex-1}. Then 
\begin{enumerate}
\item \label{item:convex-to-proper-1} $\ell$ is a proper loss function;
\item \label{item:convex-to-proper-2} $\varphi$ is convex;
\item \label{item:convex-to-proper-3} for every $v\in V$, $\dual(v)$ is a sub-gradient of $\varphi$ at $v$, and $v$ is a sub-gradient of $\psi$ at $\dual(v)$;
\item \label{item:convex-to-proper-4} $\varphi(v) = v\,\dual(v) - \psi(\dual(v)) = - \E_{y\sim \ber(v)}[\ell(y,v)]$;
\item \label{item:convex-to-proper-5} if we define $\psi'(t):= {\sup}_{v\in V}(vt - \varphi(v))$, then $\psi'$ is convex and $\psi'(\dual(v)) = \psi(\dual(v))$ for every $v\in V$. Moreover, for any two distinct real numbers $t_1,t_2$,
\begin{equation}
\label{eq:bounded-subgrad}
0 \le \frac{\psi'(t_2) - \psi'(t_1)}{t_2 - t_1} \le 1.
\end{equation}
\end{enumerate}
\end{lemma}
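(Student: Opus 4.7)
The plan is to derive the five items in the order (3), (4), (2), (1), (5), each essentially immediate from the ones before together with the non-negativity and zero-locus hypotheses on $D_{\varphi,\psi}$. For (3), I would invoke \Cref{claim:sub-gradient-divergence} in its ``last sentence'' form: hypotheses \eqref{eq:proper-to-convex-2} and \eqref{eq:proper-to-convex-3} say precisely that $D_{\varphi,\psi}\ge 0$ and attains value $0$ at the pair $(v,\dual(v))$, so the claim immediately yields that $\dual(v)$ is a sub-gradient of $\varphi$ at $v$ and that $v$ is a sub-gradient of $\psi$ at $\dual(v)$. For (4), rearranging $D_{\varphi,\psi}(v,\dual(v))=0$ gives $\varphi(v) = v\,\dual(v) - \psi(\dual(v))$, and combining this with \eqref{eq:proper-to-convex-1} produces $\E_{y\sim\ber(v)}[\ell(y,v)] = \psi(\dual(v)) - v\,\dual(v) = -\varphi(v)$.

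Item (2) then follows from (3): having a sub-gradient at every point of the interval $V$ implies that for all $v,v'\in V$ we have $\varphi(v') \ge \varphi(v)+(v'-v)\dual(v)$, and a function on an interval that is bounded below by a supporting line at every point is convex (the standard argument: apply the supporting inequality at any convex combination $\lambda v_1 + (1-\lambda)v_2$ and take a $\lambda$-weighted sum). For (1), I would write the Bayes risk using \eqref{eq:proper-to-convex-1} as $\E_{y\sim\ber(v)}[\ell(y,v')] = \psi(\dual(v')) - v\,\dual(v')$; the hypothesis $D_{\varphi,\psi}(v,\dual(v'))\ge 0$ rearranges to $\psi(\dual(v')) - v\,\dual(v') \ge -\varphi(v)$, with equality at $v'=v$ by (4). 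Hence $v$ minimizes the Bayes risk over $V$, which is exactly properness.

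For (5), $\psi'$ is a pointwise supremum of affine functions $t\mapsto vt - \varphi(v)$ indexed by $v\in V$, so it is convex by construction. To prove $\psi'(\dual(v)) = \psi(\dual(v))$ I would run a two-sided Fenchel--Young argument: \eqref{eq:proper-to-convex-3} gives $\psi'(\dual(v)) \ge v\dual(v) - \varphi(v) = \psi(\dual(v))$, while \eqref{eq:proper-to-convex-2} with $t=\dual(v)$ gives $v'\dual(v) - \varphi(v') \le \psi(\dual(v))$ for every $v'\in V$, so taking the supremum yields the reverse inequality. The slope bound \eqref{eq:bounded-subgrad} is where $V\subseteq[0,1]$ actually enters: for $t_1<t_2$ and any $v\in V$, the identity $vt_2 - \varphi(v) = (vt_1 - \varphi(v)) + v(t_2-t_1)$ together with $v\in[0,1]$ gives $v(t_2-t_1)\in[0,t_2-t_1]$; taking suprema over $v\in V$ on both sides yields $0\le \psi'(t_2)-\psi'(t_1)\le t_2-t_1$.

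I do not expect any serious obstacle --- this lemma is the converse counterpart to \Cref{lm:proper-to-convex}, and its proof is essentially bookkeeping around the conjugate-pair structure $(\varphi,\psi)$ and the zero set of the Fenchel--Young divergence. The only subtlety is to interpret sub-gradient statements on the correct domain ($\varphi$ lives on $V\subseteq[0,1]$, while $\psi$ and $\psi'$ live on all of $\R$) and to remember that the slope bound in (5) is an assertion about the conjugate $\psi'$ rather than the original $\psi$, reflecting the fact that the hypotheses only pin down $\psi$ at the points $\{\dual(v):v\in V\}$.
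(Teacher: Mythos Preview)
Your proposal is correct and follows essentially the same approach as the paper. The paper proves the items in a slightly different order and outsources the convexity and slope bound in Item~\ref{item:convex-to-proper-5} to the helper \Cref{lm:conjugate-is-convex}, whereas you inline that short argument; for Item~\ref{item:convex-to-proper-1} the paper phrases things via $\dual(v)\in\argmin_t(\psi(t)-vt)$ rather than directly via $D_{\varphi,\psi}(v,\dual(v'))\ge 0$, but these are the same observation.
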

\begin{proof}
For every $v\in V$, equations \eqref{eq:proper-to-convex-2} and \eqref{eq:proper-to-convex-3} imply that
\[
\dual(v)\in \argmin_{t\in \R} \big(\psi(t) - vt\big),
\]
and thus
\[
v\in \argmin_{v'\in V}\Big(\psi(\dual(v')) - v\,\dual(v')\Big) = \argmin_{v'\in V}\E_{y\sim \ber(v)}\ell(y,v').
\]
This proves that $\ell$ is proper (Item~\ref{item:convex-to-proper-1}). Item~\ref{item:convex-to-proper-3} follows from \eqref{eq:proper-to-convex-2}, \eqref{eq:proper-to-convex-3} and \Cref{claim:sub-gradient-divergence}. Item~\ref{item:convex-to-proper-2} follows from Item~\ref{item:convex-to-proper-3}. 
The first equation in Item~\ref{item:convex-to-proper-4} follows from \eqref{eq:proper-to-convex-3}, and the second equation follows from \eqref{eq:proper-to-convex-1}.
For Item \ref{item:convex-to-proper-5}, the convexity of $\psi'$ and inequality \eqref{eq:bounded-subgrad} follow from \Cref{lm:conjugate-is-convex}, and $\psi'(\dual(v)) = \psi(\dual(v))$ holds because $\psi'(\dual(v)) = v\,\dual(v) - \varphi(v)$ by Item~\ref{item:convex-to-proper-3} and \eqref{eq:sub-gradient-max},  and $\psi(\dual(v)) = v\,\dual(v) - \varphi(v)$ by Item~\ref{item:convex-to-proper-4}.
\end{proof}
Our discussion above gives a correspondence between a proper loss function $\ell:\{0,1\}\times V\to \R$ and a tuple $(\varphi,\psi,\dual)$ satisfying \eqref{eq:proper-to-convex-2} and \eqref{eq:proper-to-convex-3}.
By \Cref{lm:convex-to-proper}, if $\varphi,\psi$ and $\dual$ satisfy \eqref{eq:proper-to-convex-2} and \eqref{eq:proper-to-convex-3}, then every $v\in V$ is a sub-gradient of $\psi$ at $\dual(v)$. Therefore, assuming $\psi$ is differentiable and using $\nabla \psi:\R\to \R$ to denote its derivative, if $t = \dual(v)$ for some $v\in V$, then we can compute $v$ from $t$ by
\begin{equation}
\label{eq:v-from-t}
v = \nabla \psi(t).
\end{equation}

For the cross entropy loss $\ell(y,v) = -y\ln v - (1 - y)\ln(1-v)$, we can find the corresponding $\varphi,\psi,\dual$ and $\ell\sps \psi$ (see \Cref{def:dual-loss}) as follows:
\begin{align*}
\varphi(v) & = -\E_{y\sim \ber(v)}[\ell(y,v)] = v\ln v + (1 - v)\ln(1 - v),\\
\dual(v) & = \ell(0,v) - \ell(1,v) = \ln(v/(1 - v)),\\
\psi(t) &= \sup_{v\in (0,1)}\big(vt - \varphi(v)\big) = \ln(1 + e^t),\\
\ell\sps\psi(y,t) & = \psi(t) - yt = \ln(1 + e^t) - yt,\tag{logistic loss}\\
\nabla\psi(t) & = e^t/(1 + e^t).\tag{sigmoid transformation}
\end{align*}
For the squared loss $\ell(y,v) = (y - v)^2$, we can find the corresponding $\varphi,\psi,\dual$ and $\ell\sps \psi$ as follows:
\begin{align*}
\varphi(v) & = -\E_{y\sim \ber(v)}[\ell(y,v)] = -v(1 - v)^2 - (1 - v)v^2 = v(v - 1),\\
\dual(v) & = \ell(0,v) - \ell(1,v) = 2v - 1,\\
\psi(t) & = \sup_{v\in [0,1]}\big(vt - \varphi(v)\big) = \begin{cases}
0,& \text{if }t< -1;\\
(t + 1)^2/4, & \text{if }-1 \le t \le 1;\\
t,& \text{if }t > 1.\\
\end{cases}\\
\ell\sps \psi(y,t) & = \psi(t) - yt = \begin{cases}
-yt,& \text{if }t < -1;\\
(y - (t + 1)/2)^2,& \text{if }-1 \le t\le 1;\\
(1 - y)t,& \text{if }t > 1.
\end{cases}\\
\nabla\psi(t)& = \begin{cases}
0,& \text{if }t < -1;\\
(t + 1)/2, & \text{if }-1 \le t \le 1;\\
1, & \text{if }t > 1.
\end{cases}
\end{align*}
For the squared loss, one can alternatively choose $\psi(t) = (t+1)^2/4$ for \emph{every} $t\in \R$ and accordingly define $\ell\sps \psi(y,t) = (y - (t + 1)/2)^2$ for every $y\in \{0,1\}$ and $t\in \R$. This still ensures \eqref{eq:proper-to-convex-1}, \eqref{eq:proper-to-convex-2} and \eqref{eq:proper-to-convex-3}, but \eqref{eq:bounded-derivative} no longer holds.

\section{Generalized Dual (Multi)calibration and Generalized Dual Post-processing Gap}
\label{sec:generalized-dual}
Below we generalize the notions of dual post-processing gap and dual smooth calibration error from \Cref{sec:proper-cal} by replacing the Lipschitz functions in those definitions with functions from a general class.
Here we allow the functions to not only depend on the dual prediction $g(x)$, but also depend directly on $x$. With such generality, the calibration notion we consider in \Cref{def:gen-dual-cal} below captures the notion of multicalibration \citep{hkrr2018}.
We connect this generalized calibration notion with a generalized notion of post-processing gap (\Cref{def:gen-gap})
in \Cref{thm:gen-dual}.
\begin{definition}[Generalized dual post-processing gap]
\label{def:gen-gap}
Let $\psi$ and $\ell\sps \psi$ be defined as in \Cref{def:dual-loss}.
Let $W$ be a class of functions $w:\X\times \R \to \R$.
Let $\cD$ be a distribution over $\X\times \{0,1\}$. We define the \emph{generalized dual post-processing gap} of a function $g:\X\to \R$ w.r.t.\ class $W$ and distribution $\cD$ to be
\[
\gengap_\cD\sps {\psi,W}(g) := \E\nolimits_{(x,y)\sim \cD}\ell\sps\psi(y,g(x)) - \inf\nolimits_{w\in W}\E\nolimits_{(x,y)\sim \cD}\ell\sps \psi\big(y,w (x,g(x))\big).
\]
\end{definition}
\begin{definition}[Generalized dual calibration]
\label{def:gen-dual-cal}
Let $\psi:\R\to \R$ be a differentiable function.
For a function $g:\X \to \R$, define predictor $f:\X\to \R$ such that $f(x) = \nabla \psi(g(x))$ for every $x\in \X$.\footnote{It is possible here for $f(x)$ to lie outside the interval $[0,1]$, in which case $f$ is not a valid predictor. However, if $\psi$ satisfies \eqref{eq:bounded-derivative}, then $\nabla\psi(t) \in [0,1]$ for every $t\in \R$ and thus $f(x)\in [0,1]$ for every $x\in \X$. Also, one can start from a valid predictor $f:\X\to [0,1]$ and define $g(x) = \dual(f(x))$. By \Cref{lm:convex-to-proper}, this ensures that $f(x) = \nabla\psi(g(x))$ assuming that $(\varphi,\psi,\dual)$ satisfy \eqref{eq:proper-to-convex-2} and \eqref{eq:proper-to-convex-3}.}
Let $W$ be a class of functions $w:\X\times \R \to \R$.
Let $\cD$ be a distribution over $\X\times \{0,1\}$. We define the \emph{generalized dual calibration error} of $g$ w.r.t.\ class $W$ and distribution $\cD$ to be
\[
\gence\sps {\psi,W}_\cD(g) := \sup\nolimits_{w\in W}|\E\nolimits_{(x,y)\sim \cD}[(y - f(x))w\big(x,g(x)\big)]|.
\]
\end{definition}

\begin{theorem}
\label{thm:gen-dual}
Let $\psi:\R\to \R$ be a differentiable convex function.
For $\lambda > 0$, assume that $\psi$ is $\lambda$-smooth as defined in \eqref{eq:smooth}.
Let $W$ be a class of bounded functions $w:\X\times \R\to [-1,1]$.
Define $W'$ to be the class of functions $w':\X\times \R\to \R$ such that there exist $w\in W$ and $\beta\in [-1/\lambda,1/\lambda]$ satisfying
\[
w'(x,t) = t + \beta w(x,t) \quad \text{for every }x\in \X\text{ and }t\in \R.
\]
Then for every $g:\X\to \R$ and any distribution $\cD$ over $\X\times \{0,1\}$,
\[
\gence\sps {\psi,W}_\cD(g) ^2 / 2\le \lambda\,\gengap_\cD\sps {\psi,W'}(g) \le \gence\sps {\psi,W}_\cD(g).
\]
\end{theorem}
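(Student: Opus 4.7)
The plan is to mimic the proof of \Cref{thm:dual-smooth-equiv}, replacing the family of $1$-Lipschitz univariate test functions $\eta$ with the class $W$ of functions $w(x,t)$ bounded in $[-1,1]$. The key algebraic identity is that for any $\beta\in\R$ and any $w:\X\times\R\to\R$, the dual loss after the update $g(x)\mapsto g(x)+\beta w(x,g(x))$ satisfies
\[
\ell\sps\psi(y, g(x)+\beta w(x,g(x))) - \ell\sps\psi(y, g(x)) = \big(\psi(g(x)+\beta w(x,g(x)))-\psi(g(x))\big) - y\beta w(x,g(x)).
\]
Since $f(x)=\nabla\psi(g(x))$, both convexity and $\lambda$-smoothness of $\psi$ will let us sandwich the $\psi$-difference around its linearization $\nabla\psi(g(x))\beta w(x,g(x)) = f(x)\beta w(x,g(x))$, producing the term $-\beta(y-f(x))w(x,g(x))$ on which the generalized dual calibration error acts.

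For the upper bound $\lambda\,\gengap\sps{\psi,W'}_\cD(g)\le \gence\sps{\psi,W}_\cD(g)$, I will take an arbitrary $w'\in W'$, write $w'(x,t)=t+\beta w(x,t)$ with $w\in W$ and $|\beta|\le 1/\lambda$, and apply the convexity inequality $\psi(t+h)\ge \psi(t)+\nabla\psi(t)h$ to get
\[
\E\ell\sps\psi(y,w'(x,g(x))) - \E\ell\sps\psi(y,g(x)) \ge -\beta\,\E[(y-f(x))w(x,g(x))].
\]
Rearranging and bounding $|\E[(y-f(x))w(x,g(x))]|\le \gence\sps{\psi,W}_\cD(g)$ and $|\beta|\le 1/\lambda$ yields the claim after taking the supremum over $w'\in W'$.

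For the lower bound $\gence\sps{\psi,W}_\cD(g)^2/2 \le \lambda\,\gengap\sps{\psi,W'}_\cD(g)$, I will use $\lambda$-smoothness in its quadratic-upper-bound form $\psi(t+h)\le \psi(t)+\nabla\psi(t)h+\tfrac{\lambda}{2}h^2$ with $h=\beta w(x,g(x))$, giving
\[
\E\ell\sps\psi(y,g(x)) - \E\ell\sps\psi(y, g(x)+\beta w(x,g(x))) \ge \beta\,\E[(y-f(x))w(x,g(x))] - \tfrac{\lambda}{2}\beta^2\,\E[w(x,g(x))^2].
\]
Using $|w|\le 1$ to bound $\E[w^2]\le 1$, and then optimizing over $\beta$ by setting $\beta:=\E[(y-f(x))w(x,g(x))]/\lambda$ (which lies in $[-1/\lambda,1/\lambda]$ because $|y-f(x)|\le 1$ and $|w|\le 1$ force the numerator into $[-1,1]$), I obtain a gain of $\E[(y-f(x))w(x,g(x))]^2/(2\lambda)$. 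Taking the supremum over $w\in W$ and flipping the sign of $w$ if necessary gives $\gengap\sps{\psi,W'}_\cD(g) \ge \gence\sps{\psi,W}_\cD(g)^2/(2\lambda)$.

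The main conceptual step — and the only place the specific structure of $W'$ is used — is verifying that the optimal $\beta$ in the lower-bound argument is admissible (i.e.\ $|\beta|\le 1/\lambda$), so that the update $w'(x,t)=t+\beta w(x,t)$ actually lies in $W'$. This is routine here because the $[-1,1]$ boundedness of $w$ and the fact that $f(x)\in[0,1]$ jointly cap $|\E[(y-f(x))w(x,g(x))]|$ by $1$. The rest of the argument is a direct generalization of the squared-loss proof of \Cref{thm:sm-gap}, with convexity handling one direction and $\lambda$-smoothness handling the other.
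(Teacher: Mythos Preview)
Your proposal is correct and follows essentially the same route as the paper's proof: both directions come from sandwiching $\psi(g(x)+\beta w)-\psi(g(x))$ between its first-order lower bound (convexity) and its quadratic upper bound ($\lambda$-smoothness), then choosing $\beta=\E[(y-f(x))w(x,g(x))]/\lambda$ for the lower bound. Two minor remarks: the ``flipping the sign of $w$'' step is unnecessary since you already have the squared quantity $\E[(y-f(x))w(x,g(x))]^2$, and your justification that $\beta\in[-1/\lambda,1/\lambda]$ relies on $f(x)\in[0,1]$, which is not among the stated hypotheses on $\psi$ --- but the paper's proof makes exactly the same implicit assumption without comment.
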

\begin{proof}
By the definition of $\ell\sps\psi$ in \Cref{def:dual-loss},
\begin{align}
& \E_{(x,y)\sim \cD}\ell\sps \psi(y,g(x)) - \E_{(x,y)\sim \cD}\ell\sps \psi\big(y,w'\big(x,g(x)\big)\big)\notag \\
= {} & \E[\psi(g(x)) - yg(x) - \psi(w'(x,g(x))) + yw'(x,g(x))]\notag \\
= {} & \E[\psi(g(x)) - \psi(w'(x,g(x))) + y\beta w(x,g(x))].\label{eq:dual-loss-diff}
\end{align}
By the convexity of $\psi$ and the $\lambda$-smoothness property, for every $t,t'\in \R$,
\begin{align}
\nabla\psi(t)(t' - t) \le  \psi(t') - \psi(t) & = \int_t^{t'}\nabla \psi(\tau)\mathrm d\tau \notag \\
& \le \int_t^{t'}(\nabla \psi(t) + \lambda(\tau - t))\mathrm d\tau =  \nabla\psi(t)(t' - t) + \frac \lambda 2(t' - t)^2.
\label{eq:convex-smooth}
\end{align}
For $x\in X$,
setting $t$ to be $g(x)$ and setting $t'$ to be $w'(x,g(x)) := g(x) + \beta w(x,g(x))$ for some $w\in W$ and $\beta\in [-1/\lambda,1/\lambda]$, we have
\[
t' - t = w'(x,g(x)) - g(x) = \beta w(x,g(x)).
\]
Plugging this into \eqref{eq:convex-smooth} and defining $f(x) := \nabla\psi(g(x)) = \nabla\psi(t)$, we have
\[
f(x)\beta w(x,g(x)) \le \psi(w'(x,g(x)))- \psi(g(x)) \le f(x)\beta w(x,g(x)) + \frac {\lambda \beta^2} 2w(x,g(x))^2.
\]
Plugging this into \eqref{eq:dual-loss-diff} and using the fact that $|w(x,g(x))| \le 1$, we have
\begin{align} & \E[(y - f(x))\beta w(x,g(x))] - \frac{\lambda \beta^2}{2}\notag \\
 \le {} & 
\E_{(x,y)\sim \cD}\ell\sps \psi(y,g(x)) - \E_{(x,y)\sim \cD}\ell\sps \psi\big(y,w'\big(x,g(x)\big)\big)\label{eq:dual-connect-1}\\
 \le {} & \E[(y - f(x))\beta w(x,g(x))].\label{eq:dual-connect-2}
\end{align}
For any $w\in W$, choosing $\beta = \E[(y - f(x))w(x,g(x))] / \lambda$ and choosing $w'\in W'$ such that $w'(x,t) = t + \beta w(x,t)$, by \eqref{eq:dual-connect-1} we have
\[
\frac{1}{2\lambda} \E[(y - f(x))w(x,g(x))]^2 \le \E_{(x,y)\sim \cD}\ell\sps \psi(y,g(x)) - \E_{(x,y)\sim \cD}\ell\sps \psi\big(y,w'\big(x,g(x)\big)\big).
\]
This proves that
\[
\gence\sps {\psi,W}_\cD(g)^2/2 \le \lambda\,\gengap_\cD\sps {\psi,W'}(g).
\]
For any $w'\in W'$, there exists $\beta\in [-1/\lambda,1/\lambda]$ and $w\in W$ such that $w'(x,t) = t + \beta w(x,t)$. By \eqref{eq:dual-connect-2},
\begin{align*}
& \E_{(x,y)\sim \cD}\ell\sps \psi(y,g(x)) - \E_{(x,y)\sim \cD}\ell\sps \psi\big(y,w'\big(x,g(x)\big)\big)\\
 \le {} & \E[(y - f(x))\beta w(x,g(x))]\\
 \le {} & (1/\lambda) |\E[(y - f(x)) w(x,g(x))]|.
\end{align*}
This proves that
\[
\lambda\,\gengap_\cD\sps {W'}(g) \le \gence\sps W_\cD(g).\qedhere
\]
\end{proof}
\subsection{Proof of Theorem~\ref{thm:dual-smooth-equiv}}
\label{sec:proof-dual-smooth-equiv}
We restate and prove \Cref{thm:dual-smooth-equiv}.
\dualsmoothequiv*
\begin{proof}
Let $W$ be the family of all $\lambda$-Lipschitz functions $\eta:\R\to [-1,1]$ and define $W'$ as in \Cref{thm:gen-dual}. We have
\begin{align*}
\gence_\cD\sps{\psi,W}(g) & = \scerror_\cD\sps{\psi,\lambda}(g), \quad \text{and}\\
\gengap_\cD\sps{\psi,W'}(g) & = \pgap_\cD\sps{\psi,\lambda}(g).
\end{align*}
\Cref{thm:dual-smooth-equiv} then follows immediately from \Cref{thm:gen-dual}.
\end{proof}
\subsection{Proof of Lemma~\ref{lm:dual-standard-smooth}}
\label{sec:proof-dual-standard-smooth}
We restate and prove \Cref{lm:dual-standard-smooth}.
\dualstandardsmooth*
\begin{proof}
For a $1$-Lipschitz function $\eta:[0,1]\to [-1,1]$, define $\eta':\R\to [-1,1]$ such that $\eta'(t) = \eta(\nabla\psi(t))$ for every $t\in \R$. 
For $t_1,t_2\in \R$, by the $\lambda$-smoothness property of $\eta$,
\[
|\eta'(t_1) - \eta'(t_2)| = |\eta(\nabla\psi(t_1)) - \eta(\nabla\psi(t_2))| \le |\nabla \psi(t_1) -\nabla\psi(t_2)| \le \lambda |t_1 - t_2|.
\]
This proves that $\eta'$ is $\lambda$-Lipschitz. Therefore,
\[
|\E\nolimits_{(x,y)\sim \cD}[(y - f(x))\eta(f(x))]| = |\E\nolimits_{(x,y)\sim \cD}(y - f(x))\eta'(g(x))|\le \scerror_\cD\sps {\psi,\lambda}(g).
\]
Taking supremum over $\eta$ completes the proof.
\end{proof}

\section{Optimization Algorithms Which Achieve Calibration}
\label{sec:app-algos}

\subsection{Iterative Risk Minimization \label{sec:heuristics}}
\label{sec:irm}

We now present another algorithm, inspired by a routine that occurs in practice.
Specifically, one intuition for why state-of-the-art DNNs have small $\pgap$ is:
if it were possible to improve the loss significantly by just adding a layer (and training it optimally),
then the human practitioner training the DNN would have added another layer and re-trained.

In this section we present a certain formalization of this algorithm, along with its calibration guarantees.
We work with the following setup:

\begin{itemize}
    \item  We are interested in models $f \in \mF$ with a complexity measure $\mu: \mF \to \Z$. Think of size for decision trees or depth for neural nets with a fixed architecture. We assume that $\mF$ contains all constant functions  and that $\mu(c) = 0$ for any constant function $c$. 
    \item We consider post-processing functions $\kappa:\izo \to \izo$ belonging to some family $K$,  which have bounded complexity under $\mu$. Formally, there exists $b = b(K) \in \Z$ such that for every $f \in \mF$ and $\kappa \in K$, 
    \[ \mu(\kappa \circ f) \leq \mu(f) + b \].
    \item We have a loss function $\ell: \zo \times \R \to \R$ . Let
    \[ \ell(f, \mD) = \E_{(x,y) \sim \mD}[\ell(y,f)].\] 
    For any complexity  bound $s$, we have an efficient procedure $\Min_\mD(s)$ that can solve the loss minimization problem over models of complexity $s$:
    \[\mathop{\min_{f \in \mF}}_{\mu(f) \leq s} \ell(f, \mD). \]
    The running time can grow with $s$. For each $s \in \Z$, let $\opt_s$ denote the minimum of this program and let $f^*_s$ denote the model found by $\Min_\mD(s)$ that achieves this minimum. 
\end{itemize}

We use this to present an algorithm that produces a model $f$ which is smoothly calibrated, has loss bounded by $\opt_s$, and complexity not too much larger than $s$.

\begin{algorithm}
\caption{Local search for small $\pgap$.}
\label{alg:cgap}
\begin{algorithmic}
\State Input $s_0$.
\State $s \gets s_0$.
\State $h \gets 0$. 
\While{True}
    \State $f^*_s = \Min(s)$.
    \State $f^*_{s + b} = \Min(s + b)$.
    \If{$\opt_{s + b} \geq \opt_s - \alpha$}       
     	\State Return $f^*_s$. 
\Else 
    \State $h \gets h +1$.
    \State $s  \gets s + b$.
\EndIf
\EndWhile
\end{algorithmic}
\end{algorithm}

\begin{theorem}
\label{thm:cgap}
On input $s_0$, \Cref{alg:cgap} terminates in $h \leq 1/\alpha$ steps and returns a model $f \in \mF$ which satisfies the following guarantees:
\begin{enumerate}
\item $\mu(f) = t  \leq s_0 + h b$. 
\item $\ell_2(f, \mD) = \opt_t \leq \opt_{s_0} - h \alpha$.
\item $\cgap(f) \leq \alpha$, hence $\scerror(f) \leq  \sqrt{\alpha}$.
\end{enumerate}
\end{theorem}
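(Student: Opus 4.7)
The plan is to verify the three numbered conclusions one at a time, tracking the loop invariants of \Cref{alg:cgap} and then invoking \Cref{thm:sm-gap} for the calibration bound at the end. The key observation is that every iteration that does \emph{not} return must strictly decrease $\opt_{s}$ by at least $\alpha$, and that whenever the algorithm does return, the stopping condition $\opt_{s+b} \geq \opt_s - \alpha$ directly certifies small post-processing gap.

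First I would establish the loop invariants by induction on the iteration count $h$: after $h$ iterations of the \textbf{Else} branch, the current complexity budget satisfies $s = s_0 + hb$ and the current optimum satisfies $\opt_s \leq \opt_{s_0} - h\alpha$ (the latter because the algorithm only takes the \textbf{Else} branch when $\opt_{s+b} < \opt_s - \alpha$, and then replaces $s$ by $s+b$). Since the squared loss is non-negative and $\opt_{s_0} \leq \opt_0 \leq 1/4$ (using the constant predictor $1/2$, which is in $\mF$ with complexity $0$), we get $h\alpha \leq \opt_{s_0} \leq 1$, so the loop terminates in at most $1/\alpha$ iterations. Conclusions (1) and (2) then follow directly from the invariants at the moment the algorithm returns.

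For conclusion (3), suppose the algorithm returns $f = f^*_t$ with $t = s_0 + hb$. The stopping condition guarantees $\opt_{t+b} \geq \opt_t - \alpha$. For any $\kappa \in K$, the assumption on the post-processing family gives $\mu(\kappa \circ f^*_t) \leq \mu(f^*_t) + b \leq t + b$, so $\ell(\kappa \circ f^*_t, \mD) \geq \opt_{t+b} \geq \opt_t - \alpha = \ell(f^*_t, \mD) - \alpha$. Taking the infimum over $\kappa \in K$ yields
\begin{equation*}
\cgap_\cD(f^*_t) = \ell(f^*_t, \mD) - \inf_{\kappa \in K} \ell(\kappa \circ f^*_t, \mD) \leq \alpha.
\end{equation*}
Finally, applying the upper bound $\scerror_\cD(f)^2 \leq \cgap_\cD(f)$ from \Cref{thm:sm-gap} gives $\scerror_\cD(f^*_t) \leq \sqrt{\alpha}$, completing the proof.

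There is no genuinely hard step here; the argument is mostly bookkeeping. The one subtlety worth highlighting is that the theorem implicitly requires the family $K$ appearing in $\cgap$ to match the post-processing family constrained by $b$, i.e.\ the $K$ of \Cref{def:pgap} (Lipschitz updates) should be contained in the $K$ used to bound $\mu(\kappa \circ f)$; this is exactly the setup assumption. The only other point to be careful about is boundedness of the loss to ensure termination — here we rely specifically on the squared loss being non-negative and at most a constant on $\zo$, which controls the number of iterations by $1/\alpha$.
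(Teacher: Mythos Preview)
Your proposal is correct and follows essentially the same argument as the paper: track the invariants $s = s_0 + hb$ and $\opt_s \le \opt_{s_0} - h\alpha$, use nonnegativity of the squared loss to bound $h \le 1/\alpha$, and derive $\cgap(f^*_t) \le \alpha$ from the stopping condition together with $\mu(\kappa \circ f^*_t) \le t+b$. The only cosmetic difference is that the paper phrases step (3) as a contradiction (assume $\cgap > \alpha$, produce a $\kappa$ witnessing $\opt_{t+b} < \opt_t - \alpha$, contradict termination), whereas you argue it directly; these are logically identical.
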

\begin{proof}
The bound on the number of steps follows by observing that each update decreases $\ell_2(f, \mD)$ by $\alpha$. Since $\opt_{s_0} \leq \opt_0 \leq 1$, this can only happen $1/\alpha$ times. This also implies items (1) and (2), since each update increases the complexity by no more than $b$, while decreasing the loss by at least $\alpha$. 

To prove claim (3), assume for contradiction that $\cgap(f) > \alpha$. Then there exists $\kappa \in K$ such that if we consider $f' = \kappa \circ f$, then
\[ \mu(f') \leq t + b, \ \ \ell_2(f, \mD) < \opt_t - \alpha.\]
But this $f'$ provides a certificate that
\[ \opt_{t + b} \leq \ell_2(f', \mD) < \opt_t - \alpha.\]
Hence the termination condition is not satisfied for $f$.  So by contradiction, it must hold that $\cgap(f) \leq \alpha$, hence $\scerror(f) \leq  \sqrt{\alpha}$.
\end{proof}

We now present an alternative algorithm that is close in spirit to \cref{alg:cgap}, but can be viewed more directly as minimizing a regularized loss function. We assume access to the same minmization procedure $\Min_\mD(s)$ as before. We define the regularized loss
\[ \ell^\lambda(f, \mD) = \ell(f, \mD) + \lambda \mu(f). \]
$\lambda$ quantifies the tradeoff between accuracy and model complexity that we accept. In other words, we accept increasing $\mu(f)$ by $1$ if it results in  a reduction of at least $\lambda$ in the loss.

\begin{algorithm}
\caption{Regularized loss minimization  for small $\pgap$.}
\label{alg:lreg}
\begin{algorithmic}
\State $r \gets \lfloor 1/\lambda \rfloor$. 
\While{$s \in \{0, \cdots, r\}$}
    \State $f^*_s \gets \Min(s)$.
    \State $\opt_s \gets \ell_2(f^*_s, \mD)$.
\EndWhile
\State $t \gets \argmin_{s \in \{0, \ldots, r\}} \opt_s + \lambda s$.
\State Return $f^*_t$.
\end{algorithmic}
\end{algorithm}

\begin{theorem}
\label{thm:lreg}
	Let $\lambda = \alpha/b$. Algorithm \ref{alg:lreg} returns a model $f$ with $\mu(f) = t \leq  \lfloor 1/\lambda \rfloor$ where
	\begin{enumerate}
		\item $f = \argmin_{f' \in \mF} \ell^\lambda(f', \mD)$ and $\ell_2(f, \mD) = \opt_t$.
		\item For any $s \neq t$, 
		\[ \opt_s \geq \opt_t - (s - t)\lambda. \] 
		\item $\cgap(f) \leq \alpha$, hence $\scerror(f) \leq  \sqrt{\alpha}$.
	\end{enumerate}
\end{theorem}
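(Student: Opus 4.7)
The plan is to prove Parts (1), (2), (3) in that order; Part (1) carries most of the weight, after which the others follow by short arguments using \Cref{thm:sm-gap} and the definition of $t$. Throughout, I would use the monotonicity $\opt_s \leq \opt_{s'}$ for $s \geq s'$ (the constraint set grows), that the constant predictor $1/2$ has $\mu = 0$ and squared loss at most $1/4$ (hence $\opt_0 \leq 1/4$), and that $\ell_2 \in [0,1]$.

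For Part (1), I would split on $s' := \mu(f')$. If $s' \leq r$, then $\ell_2(f', \mD) \geq \opt_{s'}$, so $\ell^\lambda(f', \mD) \geq \opt_{s'} + \lambda s' \geq \opt_t + \lambda t = \ell^\lambda(f^*_t, \mD)$ by the choice of $t$ as the argmin of $\opt_s + \lambda s$ over $s \in \{0,\ldots,r\}$. If $s' > r = \lfloor 1/\lambda \rfloor$, then $s' \geq r+1 > 1/\lambda$, so $\lambda s' > 1$ and thus $\ell^\lambda(f', \mD) > 1$; this beats $\ell^\lambda(f^*_t, \mD) \leq \opt_0 + \lambda \cdot 0 \leq 1/4$ (using $s=0$ in the argmin). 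The equality $\ell_2(f,\mD) = \opt_t$ is then automatic from $f = f^*_t$.

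For Part (2), the case $s \in \{0,\ldots,r\} \setminus \{t\}$ is immediate from the argmin: $\opt_t + \lambda t \leq \opt_s + \lambda s$ rearranges to $\opt_s \geq \opt_t - (s-t)\lambda$. For $s > r$, I would show the right-hand side is already non-positive: from $\opt_t + \lambda t \leq \opt_0 \leq 1$ we get $\lambda t \leq 1 - \opt_t$, and since $s \geq r+1 > 1/\lambda$,
\[
(s-t)\lambda > 1 - \lambda t \geq \opt_t,
\]
so $\opt_t - (s-t)\lambda < 0 \leq \opt_s$.

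For Part (3), I appeal directly to Part (1). For any $\kappa \in K$ we have $\kappa \circ f \in \mF$ with $\mu(\kappa \circ f) \leq \mu(f) + b$, so global optimality of $f$ under $\ell^\lambda$ gives
\[
\ell_2(\kappa \circ f, \mD) + \lambda(\mu(f)+b) \geq \ell^\lambda(\kappa \circ f, \mD) \geq \ell^\lambda(f, \mD) = \ell_2(f,\mD) + \lambda \mu(f),
\]
and using $\lambda b = \alpha$ this rearranges to $\ell_2(\kappa \circ f, \mD) \geq \ell_2(f, \mD) - \alpha$. Taking the infimum over $\kappa \in K$ yields $\cgap(f) \leq \alpha$, and \Cref{thm:sm-gap} then gives $\scerror(f) \leq \sqrt{\cgap(f)} \leq \sqrt{\alpha}$. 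The only delicate point is the $s > r$ bookkeeping in Parts (1) and (2), which relies on the boundedness of the squared loss ($\opt_0 \leq 1/4$); everything else is a standard manipulation of a regularized objective.
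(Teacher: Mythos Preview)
Your proposal is correct and follows essentially the same route as the paper. The only cosmetic differences are that the paper absorbs the $s>r$ case of Part~(2) into the global-argmin statement of Part~(1) (so no separate case analysis is needed there), and it phrases Part~(3) as a contradiction rather than a direct inequality; the underlying computations are identical to yours.
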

\begin{proof}
	We claim that 
	\[ \min_{f' \in \mF} \ell^\lambda(f', \mD) \leq 1.\]
	This follows by taking the constant $1/2$, which results in $\ell_2(0, \mD) \leq 1$ and $\mu(f) = 0$. This means that the optimum is achieved for $f$ where $\mu(f) \leq 1/\lambda$. For $f$ such that $\mu(f) \leq t$, it is easy to see that Algorithm \ref{alg:lreg} finds the best model.

	Assume that $s \neq t$. By item (1), we must have 
	\[ \opt_t + \lambda t \leq \opt_s + \lambda s \]
	Rearranging this gives the claimed bound.

	Assume that $\cgap(f) = \alpha' > \alpha$. Then there exists $f' = \kappa(f)$ such that $\mu(f') \leq \mu(f) + b$, but $\ell(f' ,\mD) \leq \ell_2(f, \mD) - \alpha'$. But this means that 
	\begin{align*} 
		\ell^\lambda(f') &=   \ell(f', \mD) + \lambda \mu(f')\\
		& \leq \ell_2(f, \mD) - \alpha' + \lambda (\mu(f) + b) \\
		& = \ell^\lambda(f) +(\lambda b - \alpha')\\
		& < \ell^\lambda(f)
	\end{align*}
	where the inequality holds by our choice of $\lambda = \alpha/b$.
	But this contradicts item (1). 	
\end{proof}

Note that if we take $s - t = h b$, then $(s - t)\lambda = h \alpha$, so Item (2) in Theorem \ref{thm:lreg} gives a statement matching Item (2) in Theorem \ref{thm:cgap}. The difference is that we now get a bound for any $s \neq t$. When $s  > t$, item (3) upper bounds $\opt_t - \opt_s$, which measures how much the $\ell_2$  loss decreases with increased complexity. When $s < t$, $\opt_s > \opt_t$, and item (3) lower bounds how much the loss increases when shrink the complexity of the model.

\section{Helper Lemmas}
\begin{claim}
\label{claim:lip-max}
Let $\eta_1,\eta_2:[0,1]\to \R$ be two $1$-Lipschitz functions. Then $\max(\eta_1(v),\eta_2(v))$ and $\min(\eta_1(v),\eta_2(v))$ are also $1$-Lipschitz functions of $v\in [0,1]$.
\end{claim}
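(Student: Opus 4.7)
The plan is to handle the $\max$ case directly and then reduce the $\min$ case to it via the identity $\min(a,b) = -\max(-a,-b)$, since $-\eta_1$ and $-\eta_2$ are also $1$-Lipschitz whenever $\eta_1$ and $\eta_2$ are.

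For the $\max$ case, fix $u,v \in [0,1]$ and without loss of generality assume $\max(\eta_1(u),\eta_2(u)) \ge \max(\eta_1(v),\eta_2(v))$, so that the absolute value we want to bound equals $\max(\eta_1(u),\eta_2(u)) - \max(\eta_1(v),\eta_2(v))$. Let $i \in \{1,2\}$ be an index attaining the maximum at $u$, i.e.\ $\max(\eta_1(u),\eta_2(u)) = \eta_i(u)$. Using the trivial bound $\max(\eta_1(v),\eta_2(v)) \ge \eta_i(v)$, I would conclude
\[
\max(\eta_1(u),\eta_2(u)) - \max(\eta_1(v),\eta_2(v)) \le \eta_i(u) - \eta_i(v) \le |u-v|,
\]
where the last inequality uses that $\eta_i$ is $1$-Lipschitz. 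This gives the $1$-Lipschitz bound for $\max(\eta_1,\eta_2)$.

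For the $\min$ case, I would just observe that $\min(\eta_1(v),\eta_2(v)) = -\max(-\eta_1(v),-\eta_2(v))$; since $-\eta_1$ and $-\eta_2$ are $1$-Lipschitz, the previous paragraph applied to them shows $\max(-\eta_1,-\eta_2)$ is $1$-Lipschitz, and negation preserves the Lipschitz constant.

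There is no real obstacle here; the only mild subtlety is the case analysis of which function attains the maximum at $u$ versus $v$, which the WLOG step sidesteps cleanly.
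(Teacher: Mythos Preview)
Your proposal is correct and essentially matches the paper's argument. The paper also picks an index attaining the maximum and compares to a single $\eta_i$; it does both inequality directions explicitly rather than invoking WLOG, and for $\min$ it just says ``a similar argument works'' instead of your negation reduction, but these are cosmetic differences.
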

\begin{proof}
Fix arbitrary $v,v'\in [0,1]$.
Let $i,i'\in \{1,2\}$ be such that $\max(\eta_1(v),\eta_2(v)) = \eta_i(v)$ and $\max(\eta_1(v'),\eta_2(v')) = \eta_{i'}(v')$. We have
\begin{align*}
\max(\eta_1(v),\eta_2(v)) - \max(\eta_1(v'),\eta_2(v')) & \ge \eta_{i'}(v) - \eta_{i'}(v') \ge -|v - v'|, \quad \text{and}\\
\max(\eta_1(v),\eta_2(v)) - \max(\eta_1(v'),\eta_2(v')) & \le \eta_i(v) - \eta_i(v') \le |v - v'|.
\end{align*}
This proves that $\max(\eta_1(v),\eta_2(v))$ is $1$-Lipschitz. A similar argument works for $\min(\eta_1(v),\eta_2(v))$.
\end{proof}
\begin{lemma}
\label{lm:lip}
Let $\eta:[0,1]\to [-1,1]$ be a $1$-Lipschitz function. Define $\kappa:[0,1]\to [0,1]$ such that $\kappa(v) = \proj_{[0,1]}(v + \eta(v))$ for every $v\in [0,1]$. Define $\eta':[0,1]\to [-1,1]$ such that $\eta'(v) = \kappa(v) - v$ for every $v\in [0,1]$. Then $\eta'$ is $1$-Lipschitz.
\end{lemma}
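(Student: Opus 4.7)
The plan is to rewrite $\eta'(v)$ as a min/max composition of three obviously $1$-Lipschitz functions and then invoke Claim~\ref{claim:lip-max}.

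First I would unfold the projection. Since $\proj_{[0,1]}(u) = \min(\max(u,0),1)$, we have
\[
\kappa(v) = \min\bigl(\max(v+\eta(v),\, 0),\; 1\bigr).
\]
The key observation is that subtracting $v$ can be pushed inside both the $\max$ and the $\min$ (because, for any reals $a,b,c$, $\min(a,b) - c = \min(a-c, b-c)$ and similarly for $\max$). Applying this yields
\[
\eta'(v) \;=\; \kappa(v) - v \;=\; \min\bigl(\max(\eta(v),\, -v),\; 1-v\bigr).
\]

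Next, I would observe that each of the three building blocks is $1$-Lipschitz on $[0,1]$: the function $\eta$ by hypothesis, and $v \mapsto -v$ and $v \mapsto 1-v$ trivially. Two applications of Claim~\ref{claim:lip-max} then give that $\max(\eta(v), -v)$ is $1$-Lipschitz, and hence $\eta'(v) = \min(\max(\eta(v),-v),\, 1-v)$ is $1$-Lipschitz.

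Finally I would check the range: since $\kappa(v) \in [0,1]$ and $v \in [0,1]$, we have $\eta'(v) = \kappa(v) - v \in [-1,1]$, matching the stated codomain. There is no real obstacle here; the only subtlety is the initial algebraic rewrite that turns the projection-plus-subtraction into a clean $\min/\max$ of $1$-Lipschitz functions, after which Claim~\ref{claim:lip-max} closes the argument immediately.
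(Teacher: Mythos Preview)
Your proposal is correct and is essentially identical to the paper's proof: the paper also rewrites $\eta'(v) = \min(\max(\eta(v),-v),\,1-v)$ and applies Claim~\ref{claim:lip-max} twice. Your added remarks (pushing the subtraction of $v$ through $\min/\max$, and the range check) just make explicit what the paper leaves implicit.
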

\begin{proof}
By the definition of $\kappa$, we have 
$
\kappa(v) = \min(\max(v + \eta(v),0),1)
$.
Therefore,
$\eta'(v) = \min(\max(\eta(v),-v), 1 - v)$. The lemma is proved by applying \Cref{claim:lip-max} twice.
\end{proof}
\begin{lemma}
\label{lm:conjugate-is-convex}
Let $V\subseteq [a,b]$ be a non-empty set. For some $m\in \R$, let $\varphi:V\to [m,+\infty)$ be a function lower bounded by $m$. Define $\psi:\R\to \R$ such that $\psi(t) = \sup_{v\in V}\Big(vt - \varphi(v)\Big)$. Then $\psi$ is a convex function. Moreover, for $t_1< t_2$, we have
\[
a(t_2 - t_1) \le \psi(t_2) - \psi(t_1) \le b(t_2 - t_1).
\]
\end{lemma}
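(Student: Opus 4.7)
The plan is to exploit the fact that $\psi$ is defined as a pointwise supremum of the affine functions $t \mapsto vt - \varphi(v)$ indexed by $v \in V$. First I would verify that $\psi(t)$ is actually finite for every $t \in \R$ (which is needed so that $\psi$ is a well-defined real-valued function), using the fact that $V \subseteq [a,b]$ is bounded and $\varphi \ge m$: for each fixed $t$, the quantity $vt - \varphi(v) \le \max(|a|,|b|)\cdot|t| - m$ is uniformly bounded above in $v$.

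For convexity, I would run the standard ``supremum of affine functions is convex'' argument. Fix $t_1, t_2 \in \R$ and $\lambda \in [0,1]$. For every $v \in V$,
\[
v(\lambda t_1 + (1-\lambda) t_2) - \varphi(v) = \lambda\bigl(vt_1 - \varphi(v)\bigr) + (1-\lambda)\bigl(vt_2 - \varphi(v)\bigr) \le \lambda \psi(t_1) + (1-\lambda)\psi(t_2),
\]
and taking the supremum over $v \in V$ on the left yields $\psi(\lambda t_1 + (1-\lambda)t_2) \le \lambda\psi(t_1) + (1-\lambda)\psi(t_2)$.

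For the two-sided Lipschitz-type bound, fix $t_1 < t_2$. For every $v \in V$, use $v \in [a,b]$ and $t_2 - t_1 > 0$ to sandwich
\[
a(t_2 - t_1) \le v(t_2 - t_1) \le b(t_2 - t_1),
\]
and then rewrite $vt_2 - \varphi(v) = (vt_1 - \varphi(v)) + v(t_2 - t_1)$. The upper bound on $v(t_2-t_1)$ gives $vt_2 - \varphi(v) \le (vt_1 - \varphi(v)) + b(t_2-t_1) \le \psi(t_1) + b(t_2 - t_1)$, and taking $\sup_v$ on the left yields $\psi(t_2) - \psi(t_1) \le b(t_2 - t_1)$. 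For the lower bound I would take $\sup_v$ after using $v(t_2-t_1) \ge a(t_2 - t_1)$, which gives $\psi(t_2) \ge \sup_v \bigl((vt_1 - \varphi(v)) + a(t_2 - t_1)\bigr) = \psi(t_1) + a(t_2 - t_1)$.

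There is no real obstacle: each step is a one-line manipulation of the defining supremum. The only mild subtlety worth being explicit about is the finiteness check at the start, since the lemma's hypothesis ``$\psi:\R \to \R$'' tacitly requires the sup to be finite; the boundedness of $V$ together with the lower bound $\varphi \ge m$ handles this cleanly.
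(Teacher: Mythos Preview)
Your proposal is correct and follows essentially the same argument as the paper's proof: the convexity step is identical (supremum of affine functions), and the two-sided slope bound is obtained by the same one-line manipulation $vt_2 - \varphi(v) = (vt_1 - \varphi(v)) + v(t_2 - t_1)$ followed by bounding $v(t_2 - t_1)$ using $v\in[a,b]$ and taking the supremum. The only difference is that you add an explicit finiteness check for $\psi$, which the paper leaves implicit.
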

\begin{proof}
For any $p\in [0,1], t_1,t_2\in \R$, and $v\in V$,
\[
v(pt_1 + (1-p)t_2) - \varphi(v) = p(vt_1 - \varphi(v)) + (1 - p)(vt_2 - \varphi(v)) \le p\psi(t_1) + (1 - p)\psi(t_2).
\]
Taking supremum over $v\in V$, we have
\[
\psi(pt_1 + (1-p)t_2) \le p\psi(t_1) + (1-p) \psi(t_2).
\]
This proves the convexity of $\psi$. Moreover, if $t_1,t_2\in \R$ satisfies $t_1 < t_2$, then for any $v\in V$,
\[
vt_1 - \varphi(v) = vt_2 - \varphi(v) +v(t_1 - t_2) \le vt_2 - \varphi(v) +a(t_1 - t_2) \le \psi(t_2) -a(t_2 - t_1).
\]
Taking supremum over $v\in V$, we have
\[
\psi(t_1) \le \psi(t_2) + a(t_1 - t_2).
\]
Similarly,
\[
vt_2 - \varphi(v) = vt_1 - \varphi(v) + v(t_2 - t_1) \le vt_1 - \varphi(v) + b(t_2 - t_1) \le \psi(t_1) + b(t_2 - t_1).
\]
Taking supremum over $v\in V$, we have
\[
\psi(t_2) \le \psi(t_1) + b(t_2 - t_1).\qedhere
\]
\end{proof}

\end{document}